\documentclass{article}

\usepackage{PRIMEarxiv}
\usepackage{authblk}

\usepackage[utf8]{inputenc} 
\usepackage[T1]{fontenc}    
\usepackage{hyperref}       
\usepackage{url}            
\usepackage{booktabs}       
\usepackage{amsfonts}       
\usepackage{nicefrac}       
\usepackage{microtype}      
\usepackage{fancyhdr}       
\usepackage{graphicx}       
\graphicspath{{media/}}     

\pagestyle{fancy}
\thispagestyle{empty}
\rhead{ \textit{ }} 

\fancyhead[LO]{Learning with Constraints}


\usepackage{amsmath, amssymb, natbib, graphicx, url, algorithm2e, amsthm}
\usepackage{bbm}
\usepackage{mathtools}
\usepackage{color}
\usepackage{thm-restate}
\usepackage{times}
\usepackage{fluff} 

\allowdisplaybreaks

\newcommand{\alg}{\textsc{Alg}}
\newcommand{\opt}{\textsc{Opt}}
\newcommand{\reg}{\textsc{Regret}}


\title{A New Benchmark for Online Learning with Budget-Balancing Constraints}
  

\author[1,2]{Mark Braverman$^{*}$}
\author[1]{Jingyi Liu$^{*}$}
\author[2]{Jieming Mao}
\author[2]{Jon Schneider}
\author[1]{Eric Xue$^{*}$}

\affil[1]{Department of Computer Science, Princeton University}
\affil[2]{Google Research}


\begin{document}
\maketitle

\renewcommand{\thefootnote}{\fnsymbol{footnote}}

\footnote{Research is supported in part by the NSF Alan T. Waterman Award, Grant No. 1933331}
\renewcommand*{\thefootnote}{\arabic{footnote}}

\begin{abstract}%
The adversarial Bandit with Knapsack problem is a multi-armed bandits problem with budget constraints and adversarial rewards and costs. In each round, a learner selects an action to take and observes the reward and cost of the selected action. The goal is to maximize the sum of rewards while satisfying the budget constraint. The classical benchmark to compare against is the best fixed distribution over actions that satisfies the budget constraint in expectation. Unlike its stochastic counterpart, where rewards and costs are drawn from some fixed distribution \citep{Badanidiyuru18}, the adversarial BwK problem does not admit a no-regret algorithm for every problem instance due to the ``spend-or-save'' dilemma \citep{Immorlica22}. 

A key problem left open by existing works is whether there exists a weaker but still meaningful benchmark to compare against such that no-regret learning is still possible. In this work, we present a new benchmark to compare against, motivated both by real-world applications such as autobidding in repeated auctions and by its underlying mathematical structure. The benchmark is based on the Earth Mover's Distance (EMD), and we show that sublinear regret is attainable against any strategy whose spending pattern is within EMD $o(T^2)$ of any sub-pacing spending pattern. 

As a special case, we obtain results against the ``pacing over windows'' benchmark, where we partition time into disjoint windows of size $w$ and allow the benchmark strategies to choose a different distribution over actions for each window while satisfying a pacing budget constraint. Against this benchmark, our algorithm obtains a regret bound of $\tilde{O}(T/\sqrt{w}+\sqrt{wT})$. We also show a matching lower bound, proving the optimality of our algorithm in this important special case. In addition, we provide further evidence of the necessity of the EMD condition for obtaining a sublinear regret.
\end{abstract}

\keywords{Bandits with Knapsack \and Online Learning with Time-Varying Constraints}

\section{Introduction}


The multi-armed bandits (MAB) problem is the predominant model in which the tension between exploration and exploitation has been studied for decades \citep{slivkins19introduction}.
In this model, a learner chooses an action, frequently referred to as an arm in the literature, and learns its reward at each time step.
Her goal is to maximize her total reward over finitely many time steps.
The learner confronts the tension between exploration and exploitation every time she chooses an action: should she choose an action that she already believes to yield high rewards or should she choose a potentially suboptimal action in hopes of discovering an even better alternative?

We study a generalization of the vanilla MAB setting known as the Bandits with Knapsacks (BwK) problem.
Introduced by \cite{Badanidiyuru18}, this problem differs from its standard counterpart in that each action now also consumes some amount of resources, which the learner possesses in limited supply.
Once the learner runs out of one of the resources, she must stop choosing actions.
The goal of the learner is now to maximize her total reward subject to her budget, or knapsack, constraints, hence the name Bandits with Knapsacks (BwK).
In this paper we focus on the single resource setting.
 
The BwK with a single resource setting captures a wide range of real world problems, including, but not limited to, dynamic pricing and repeated auctions \cite{Badanidiyuru18, castiglioni2024online, aggarwal2024auto}.

Prior literature has studied the BwK problem in both stochastic and adversarial settings.
In the stochastic BwK setting, the reward and cost of each action comes from some known distribution.
In contrast, in the adversarial variant, an adversary with access to the learner's algorithm (but not to her randomness) chooses the rewards and costs.
\cite{Badanidiyuru18} give an efficient algorithm for the stochastic setting that competes against the best fixed distribution over actions and show that no algorithm can perform better.
That is, let $\opt$ denote the expected total reward of the best fixed distribution over actions that spends at most the learner's budget over $T$ time steps, and let $\alg$ denote the expected total reward of aforementioned algorithm.
Then, $\opt - \alg \ll T$. 
Unfortunately, \cite{Immorlica22} prove that no algorithm can replicate this guarantee in the adversarial setting.

The lower bound example in 
\cite{Immorlica22} involves only a single resource  and captures a now well-known dilemma in the BwK literature known as the ``spend or save'' dilemma.
In the example, the learner faces only two actions over $T$ time steps subject to a budget of $T/2$ of the resource.
Action 1 always yields no reward at no cost, while action 2 always costs 1 but yields different rewards in the first and second halves of the time horizon.
In the first half, action 2 always yields reward 1/2.
In the second half, action 2 yields either no reward always or reward 1 always.
In the instance with better rewards in the second half, $\opt = 3T/8$ and is realized by the strategy that chooses each action with probability 1/2. 
In the instance with worse rewards, $\opt = T/4$ and is realized by the strategy that chooses action 2 with probability 1 (until the budget runs out, after which the game ends).
Since the two instances coincide for the first half of the time horizon, the learner must decide how much of her budget to spend on the first half without knowing what instance she is in.
It is not hard to see that as a result, $\opt - \alg \geq \Omega(T)$ on at least one of the instances for any algorithm $\alg$. 

This impossibility motivated \cite{Immorlica22} to study the competitive ratio $\opt / \alg$ instead of the regret $\opt - \alg$.
They pose finding a weaker but perhaps more reasonable benchmark to compete with as an open problem.
We address this open problem by proposing a new benchmark for BwK problems with one resource and showing that achieving $o(T)$ regret against this benchmark is possible.

\subsection{Benchmark and Results}

We say a sequence of non-negative real numbers is ``sub-pacing'' if at each time step, the corresponding value in the sequence is at most a $1/T$ fraction of the budget.
We call such sequences ``sub-pacing'' because a strategy, or a sequence of distributions over actions, that ``paces'' its usage of the resource intuitively should spend exactly a $1/T$ fraction of the budget at each time step.
Given a set of strategies of interest, we propose as a new benchmark the expected total reward of the best strategy in this set whose spending pattern, that is, the sequence formed by the strategy's expenditure at each time step, is ``close'' to the set of sub-pacing sequences that cumulatively spend the same amount of resource.

We formalize what it means to be ``close'' using the Earth Mover's Distance (EMD) and say that two spending patterns are ``close'' to each other if the EMD between them is at most some permitted distance.
So that our benchmark remains as general as possible, we leave the set of strategies of interest and the permitted distance from the set of sub-pacing spending patterns as parameters that can specified depending on the setting.
For example, our benchmark captures the expected total reward of the best fixed distribution that spends at most a $1/T$ fraction of the budget at each time step by instantiating the set of strategies to be the set of fixed distributions and the permitted distance to be 0.
Unsurprisingly, our regret guarantee against our benchmark depends on both of these parameters.

Intuitively, the EMD between two spending patterns captures how much and how far resource has to be moved in order to build one spending pattern using resource spent by the other.
By choosing the permitted distance from the set of sub-pacing spending patterns to be sufficiently small, our benchmark excludes strategies like those behind the ``spend or save'' dilemma from contributing to the benchmark.
More concretely, in the lower bound example of \cite{Immorlica22}, one of the benchmark strategies spends one unit of resource per time step in the first half.
The per-time-step budget is $1/2$, so the EMD between this spending pattern and any sub-pacing spending pattern would be $\Omega(T^2)$.
Choosing the permitted distance to be $o(T^2)$ would thus exclude their problematic benchmark strategy.

Our main result is an algorithm, which we call $\texttt{LagrangianEMD}$, that obtains the following regret guarantee.

\smallskip
\noindent{\bf Theorem \ref{theorem:EMD-main-result} (restated, informal).}
Let $F$ denote the set of strategies of interest, and let $D$ denote the permitted distance from the set of sub-pacing spending patterns.
Let $\opt_{D, F}$ denote our benchmark, which, in the language of $F$ and $D$, is the reward of the best strategy in $F$ that is within EMD $D$ of the set of sub-pacing spending patterns.
When the reward-to-cost ratio is bounded,
\[
    \opt_{D,F} - \texttt{LagrangianEMD} \leq O(\sqrt{T \log \abs{F}} + \sqrt{D})
\]

As its name suggests, $\texttt{LagrangianEMD}$ falls into the family of primal-dual-based approaches, which have been successful in tackling both stochastic and adversarial BwK problems.
Like other primal-dual-based approaches, \texttt{LagrangianEMD} sets up a zero-sum sequential game between a primal player and a dual player, which are represented by two no-regret learning algorithms.
At each time step, the primal player chooses a distribution over actions, while the dual player chooses a Lagrangian multiplier $\lambda_t$ corresponding to a per-round budget constraint.
Payoffs are given by Lagrangified rewards.
More specifically, the payoff to the primal player is $r_t + \lambda_t (B/T - c_t)$ and the payoff to the dual player is $-\lambda_t (B/T - c_t)$.
Here, $r_t$ denotes the reward, $c_t$ denotes the cost, and $B$ denotes the budget.


By design, the dual player's payoff is 0 if the primal player chooses a ``perfectly" pacing strategy that spends exactly $B/T$ in each round.
We wish to ensure that this  (approximately) remains the case even when the primal player's spending is ``approximately uniform" over time. 
We note that in order to achieve this, the only assumption we make is on the rate of change of the dual player's strategy $|\lambda_t-\lambda_{t+1}|$.

A dual player whose rate of change is limited leads to a strategy $\lambda_t$ that is a {\em Lipschitz continuous} function in $t$. The payoff of the dual player is thus an inner product of the spending trajectory $c_t$ and $\lambda_t$. Since the dual norm to Lipschitz continuity is precisely the Earth Mover's Distance \citep{villani2003topics} (also known as the Wasserstein metric $W_1$ in the literature), the set of $c_t$'s for which $$\textstyle
\sum_t c_t\cdot  \lambda_t \approx \sum_t (B/T)\cdot \lambda_t$$
{\em for all permitted  sequences $(\lambda_t)_{t=1}^{T}$} is precisely the set of $c_t$'s that are close to uniform spending in EMD distance. 

To demonstrate the potential of \texttt{LagrangianEMD} as a general solution to BwK problems with one resource, we show that it yields non-trivial regret guarantees against a previously studied benchmark\footnote{Although we obtain non-trivial regret guarantees against this benchmark, our results should not be directly compared because our setting is slightly different.}.
We also go beyond these benchmarks and show that \texttt{LagrangianEMD} achieves the optimal regret guarantee (up to logarithmic terms) against the following natural benchmark.

Divide the entire time horizon of length $T$ into $T/w$ disjoint intervals of length $w$.
The benchmark is the expected total reward of the strategy that in each interval, plays according to the best fixed distribution over actions that spends at most a $w/T$ fraction of the budget in the interval.
Let $\opt$ denote this benchmark.
We demonstrate the following regret guarantee for \texttt{LagrangianEMD}.

\smallskip
\noindent{\bf Theorem \ref{thm:window-EMD-algo} (restated, informal).}
\texttt{LagrangianEMD} parameterized by the permitted distance $O(wT)$ and the set of strategies that in each interval, plays according to a fixed distribution achieves regret
\[
    \opt - \texttt{LagrangianEMD} \leq \Tilde{O}(T / \sqrt{w} + \sqrt{wT})
\]

We show that this regret guarantee is nearly tight in the following theorem, demonstrating that there exists instances for which the dependence in Theorem~\ref{theorem:EMD-main-result} on the permitted distance from the set of sub-pacing spending patterns cannot be improved.

\smallskip
\noindent{\bf Theorem \ref{theorem:lower-bound} (restated, informal).}
There exists an instance such that for any algorithm,
\[
    \opt - \alg \geq \Omega(T / \sqrt{w} + \sqrt{wT})
\]

Like the lower bound example of \cite{Immorlica22}, our construction only involves two arms and one resource and is quite natural.
Rewards are generated via an unbiased  random walk, while costs stay constant.
Consequently, at any point in time, the expected future reward-to-cost ratio coincides with the reward-to-cost ratio of the current time step.

\section{Related Work}
\subsection{Bandits with Knapsacks}
\cite{Badanidiyuru18} introduced the Bandits with Knapsacks (BwK) problems in the stochastic setting, where rewards and costs are jointly drawn from some fixed distribution. At each time $t$, the player pulls an arm and observes the reward and cost for that arm. The game stops when the player's budget $B$ has been depleted (called the hard-stopping assumption). \cite{Badanidiyuru18} provided an optimal algorithm and a matching lower bound in their original paper. Other algorithms have been developed since then that achieved the same optimal guarantee \citep{agrawal2014bandits, agrawal2019bandits, castiglioni2022unifying}. \cite{Immorlica22} extended the BwK problem to the adversarial setting, where both rewards and costs are adversarially chosen. In this case, no regret against the best fixed distribution over arms that satisfies the budget constraint is no longer attainable. They designed a Lagrangian-based algorithm that achieved a competitive ratio of $\Omega(K\log T)$, where $K$ is the number of arms, and showed a matching lower bound. This algorithm is also optimal for the stochastic setting when the budget $B=\Theta(T)$, but suboptimal when $B$ or the optimal sum of rewards is $o(T)$. 

Follow-up work by \cite{kesselheim2020online} improved the dependence on $K$ to $\log K$ and \cite{castiglioni2022online} achieved a constant competitive ratio when $B=\Theta(T)$. \cite{castiglioni2022unifying} achieved a constant competitive ratio (under strict feasibility assumptions) by allowing vanishing constraint violations instead of imposing hard-stopping. Under the regime $B=\Theta(T)$, \cite{castiglioni2024online} considered both budget and ROI constraints for the auto-bidding setting and relaxed the strict feasibility assumption from prior work \citep{Immorlica22, balseiro2020dual}. There are also a few works that consider environments that fall between stochastic and adversarial, for example the switching environment in \cite{slivkins2023contextual} and the environments with a soft non-stationarity measure in \cite{liu2022non}. Most works in the adversarial setting compete against the best fixed distribution over arms that satisfies the long-term constraint, where the best hope is a constant competitive ratio, while our work circumvents this impossibility by competing against strategies that spend their budget in a pacing manner.

\subsection{Online Convex Optimization with Constraints}
\label{sec: rel-work-OLC}
Another closely related line of work looks at online convex optimization problems with convex constraints. At each time $t$, the player chooses a strategy $x_t$ from some feasible convex set $\mathcal{X}$ and observes a convex cost function $f_t(\cdot)$ and constraint function $g_t(\cdot)\leq 0$. Different from the classical Bandits with Knapsack setting, the online learning algorithm is not allowed to stop when the cumulative violation of constraints (CCV) is above 0. Thus the goal here becomes both minimizing the sum of costs $\sum_t f_t(x_t)$ against some benchmark strategy and minimizing the cumulative violation $\sum_t g_t(x_t)$. The usual benchmark to compare against is the best fixed strategy $x^*$ s.t. $g_t(x^*)\leq 0, \forall t\in [T]$. A number of papers looked at constraint functions that are either time-invariant or i.i.d. drawn from some fixed distribution \citep{mahdavi2013stochastic, jenatton2016adaptive, yuan2018online, yu2017online, yu2020low, wei2020online, yi2021regret, guo2022online}. \cite{guo2022online} obtained the best known $O(\sqrt{T})$ regret and $O(1)$ CCV bounds without assuming the existence of a strictly feasible strategy for the constraints. 

More relevant to us are works that look at adversarial constraint functions. This setting was first introduced by \cite{mannor2009online} who showed that it is impossible to compete against the best fixed strategy $x^*$ that satisfies the cumulative constraint $\sum_t g_t(x_t)\leq 0$ while ensuring a vanishing cumulative violation even for linear cost and constraint functions. Later works impose more restrictions on the benchmark, by requiring the benchmark strategy $x^*$ to be feasible at all times, i.e., $g_t(x^*)\leq 0, \forall t\in [T]$ \citep{neely2017online,sun2017safety,chen2018bandit, sinha2024optimal}.
\cite{sinha2024optimal} achieved the optimal bounds (up to log factors) of $O(\sqrt{T})$ regret and $O(\sqrt{T}\log T)$ CCV (which they defined as $\sum_{t=1}^{T}(g_t(x_t))^{+}$). However, this benchmark class is very restrictive and excludes strategies that satisfy the cumulative constraints in a longer window. \cite{liakopoulos2019cautious} considered a slightly more general benchmark where any benchmark strategy needs to satisfy the cumulative constraints in any window of size $K$. In this setting, they achieved a parameterized regret bound of $O(KT/V+\sqrt{T})$ and CCV bound of $O(\sqrt{VT})$ for any $V\in [K,T)$ (choosing $V=K^{2/3}\cdot T^{1/3}$ leads to a bound of $O(K^{1/3}\cdot  T^{2/3})$ for both). The results are incomparable to ours for several reasons: (1) their ``pacing over sliding windows" benchmark is a subset of our ``pacing over windows" benchmark in Section~\ref{sec:benchmark-interval}; (2) we allow different strategies to be chosen over disjoint windows; (3) unlike the BwK setting, they are not allowed to stop once the budget constraint is violated --- this leads to a worse bound than the $O(K^{1/2} \cdot T^{1/2})$ we would obtain in this setting.

\section{Bandits with Knapsacks and One Resource}

\subsection{Problem Formulation and Assumptions}

In the Bandits with Knapsacks (BwK) setting with a single resource, a learner has $B$ units of a single resource to spend on a (finite) set of actions $A$ over $T$ rounds.
More specifically, in each round $t$, the learner chooses an action $a_t \in A$ and incurs the reward $r_t(a_t) \in [0,1]$ and the cost $c_t(a_t) \in [0,1]$ for the chosen action. 
Since we are in the bandits setting, this reward and cost is the only information that the learner observes. 
In contrast, in the full information setting, the learner additionally observes the rewards $r_t(a)$ and costs $c_t(a)$ for the other actions $a \in A$.
The learner's objective is to maximize her sum of rewards $\sum_{t=1}^T r_t(a_t)$ subject to her budget $B$. 
We assume that the set of actions $A$ contains a null action $\perp$ that the learner can choose for free to obtain no reward. We also assume that there exists $\alpha \geq 0$ such that $r_t(a) \leq \alpha \cdot c_t(a)$ for all time steps $t$ and actions $a\in A$. For the budget, we assume without loss of generality that $B\leq T$ since otherwise, the learner would not be budget-constrained. 

In each round, the learner may choose her action randomly rather than deterministically.
We use $\Delta(A)$ to denote the set of probability distributions over the set of actions $A$.
If the learner chooses an action in round $t$ according to the distribution $x_t\in \Delta(A)$, then her expected reward is $r_t(x_t) \coloneq \sum_{a \in A} r_t(a) \cdot x_t(a)$  and her expected cost is $c_t(x_t) \coloneq \sum_{a \in A} c_t(a) \cdot x_t(a)$. 

\subsection{The EMD Benchmark and Regret Definition}

We formalize our benchmark using the notion of Earth Mover's Distance (EMD).
Given two probability distributions $D$ and $D'$ whose supports lie within the metric space $(X, d)$, the EMD between $D$ and $D'$ is defined as
\[
    \textstyle \mathrm{EMD}(D, D') \coloneq \inf_{\Pi} \EE_{(x,y) \sim \Pi}[d(x, y)]
\]
where the infimum is taken over joint distributions $\Pi$ whose marginals are $D$ and $D'$.
When the metric space is $(\RR, \abs{\cdot})$, that is, the real numbers equipped with the absolute difference metric, the EMD has a closed-form expression:
\[
    \textstyle  \mathrm{EMD}(D, D') = \int_{-\infty}^\infty {\abs{F_D(x) - F_{D'}(x)}} \, \mathrm{d} x
\]
where $F_D$ and $F_{D'}$ denote the cumulative distribution functions of $D$ and $D'$, respectively.

We use the EMD to quantify the distance between two spending patterns $(c_t)_t$ and $(d_t)_t$ such that $\sum_t c_t = \sum_t d_t$.
Let $m$ denote the value of these two summations.
Note that $(c_t)_t / m$ and $(d_t)_t / m$ constitute discrete probability distributions with support in $[T]$.
The EMD between these two distributions is 
\[
    \textstyle \mathrm{EMD}((c_t)_t / m, (d_t)_t / m) = \frac{1}{m} \sum_{t=1}^T \abs{\sum_{s=1}^t (c_s - d_s)}
\]
Because we will define our benchmark using the unnormalized EMD, we overload notation for simplicity and define
\[
    \textstyle \mathrm{EMD}((c_t)_t, (d_t)_t) \coloneq \sum_{t=1}^T \abs{\sum_{s=1}^t (c_s - d_s)}
\]
Now, given $D \geq 0$, define
\[
    \textstyle G(D) \coloneq \CrBr{(x_t)_t : \exists\, (d_t)_t : \forall\,t, d_t \in [0, \frac{B}{T}] \wedge \sum_t d_t = \sum_t c_t(x_t) \wedge \mathrm{EMD}((c_t(x_t))_t, (d_t)_t) \leq D}
\]
to be the set of strategies $(x_t)_t$ (where each $x_t \in \Delta(A)$) that are within EMD $D$ of a ``sub-pacing" spending pattern $(d_t)_t$ that has the same cumulative spending as $(x_t)_t$.
The spending pattern $(d_t)_t$ is sub-pacing if $0 \leq d_t \leq B/T$ for all time steps $t$.
Our benchmark is parameterized by $D \geq 0$ and a set of strategies $F \subseteq \Delta(A)^T$ and is given by
\[
     \textstyle \textsc{Opt}_{D, F} \coloneq \max_{x \in F \cap G(D)} \sum_{t=1}^T r_t(x_t)
\]
We define the regret of a strategy $x \coloneq (x_t)_t$ against $\opt_{D, F}$ to be
\[
    \textstyle \textsc{Regret}_{\opt_{D,F}}(x) \coloneq \opt_{D, F} - \sum_{t=1}^T r_t(x_t)
\]
At times, we will also be interested in the expected budget violation of a strategy.
We define the expected budget violation up to time step $t$ of the strategy $x$ to be
\[
    \textstyle V_t(x) \coloneq \Paren{\sum_{s=1}^t (c_s(x_s) - B/T)}^+
\]
Note that $V_T(x)$ denotes the expected total budget violation.

\subsection{Benchmark and the Spend or Save Dilemma}
\label{sec:benchmark-dilemma}

Note that our benchmark circumvents the ``spend or save'' dilemma because a strategy that spends the entire budget in the first half of the time horizon is not close in EMD to any sub-pacing spending pattern.
More concretely, recall that for one of the two instances behind the lower bound construction given by \cite{Immorlica22}, the optimal strategy spends one unit of resource per time step in the first half and nothing in the second (we denote this spending pattern by $(\Vec{1}, \Vec{0})$).
The per-time-step budget in their construction is 1/2, so any sub-pacing spending pattern $(d_t)_t$ must have $0 \leq d_t \leq 1/2$ for all time steps $t$.
The distance between the optimal strategy and any sub-pacing spending pattern is then
\begin{align*}
    \textstyle \mathrm{EMD}((\Vec{1}, \Vec{0}), (d_t)_t)
        &= \textstyle \sum_{t=1}^{T/2} \abs{\sum_{s=1}^t (1 - d_s)} + \sum_{t=T/2+1}^{T} \abs{\sum_{s=1}^{T/2} (1 - d_s) - \sum_{s=T/2+1}^{t} d_s} \\
        &\geq \textstyle \sum_{t=1}^{T/2} \abs{\sum_{s=1}^t (1 - d_s)}\\
        &= \textstyle \Omega(T^2) \tag{$d_t \leq 1/2$ for all $t$}
\end{align*}
Thus, our benchmark excludes the problematic strategies behind the lower bound of \cite{Immorlica22} when $D = o(T^2)$.

\section{Algorithm \texttt{LagrangianEMD}}

In this section, we give an algorithm \texttt{LagrangianEMD} and bound its regret against $\opt_{D,F}$.
\texttt{LagrangianEMD} is given in Algorithm~\ref{algorithm:solution-EMD-subroutine}.

\begin{algorithm}    \caption{\texttt{LagrangianEMD}}\label{algorithm:solution-EMD-subroutine}
    \DontPrintSemicolon
    \KwIn{budget $B$, time horizon $T$, bound on dual action $\overline{\lambda}$, bound on EMD $D$, set of strategies $F \subseteq \Delta(A)^T$}
    Let $\alg_P$ be EXP4-IX (\cite{neu15-exp4-ix}; tuned to minimize regret against the best expert in $F$)\;
    Let $\alg_D$ be OGD over $[0, \overline{\lambda}]$ with $\lambda_0 = 0$ and step size  $\eta = \overline{\lambda}/ \sqrt{\max\{D, T\}}$\;
    \For{$t = 1, \dots, T$}{
        $x_t \gets \alg_P$, $\lambda_t \gets \alg_D$\;
        Draw $a_t \sim x_t$\;
        Incur reward $r_t(a_t)$ and cost $c_t(a_t)$\;
        Feed reward $r_t(a_t) + \lambda_t (B/T - c_t(a_t))$ to $\alg_P$\;
        Feed loss $\lambda_t (B/T - c_t(a_t))$ to $\alg_D$\; 
    }
\end{algorithm}

\begin{theorem}\label{theorem:EMD-main-result}
Let $D \geq 0$ and $F \subseteq \Delta(A)^T$.
If there exists a scalar $\alpha \geq 0$ such that $r_t(a) \leq \alpha \cdot c_t(a)$ for all time steps $t$ and actions $a$, then the regret against $\opt_{D,F}$ of choosing actions according to $\texttt{LagrangianEMD}(D, F)$ with $\overline{\lambda} = \alpha$ until no budget remains (and then choosing the null action for the remaining time steps) is at most $\alpha \cdot O(\sqrt{T \abs{A} \log \abs{F}} \cdot \log(1/\delta) + \sqrt{D})$ with probability $1 - \delta$ for all $\delta$ simultaneously.
\end{theorem}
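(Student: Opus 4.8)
The plan is to run the standard primal-dual analysis for Lagrangian-based BwK, but with the dual player's regret measured against the *optimal dual sequence induced by the near-sub-pacing structure* rather than against a single fixed multiplier. Concretely, fix the benchmark strategy $x^* = (x^*_t)_t \in F \cap G(D)$ achieving $\opt_{D,F}$, and let $(d_t)_t$ be the sub-pacing spending pattern witnessing $x^* \in G(D)$. The first step is the usual telescoping: since $\alg_P$ is EXP4-IX tuned against experts in $F$ and is fed the Lagrangified rewards $r_t(a_t) + \lambda_t(B/T - c_t(a_t))$, with high probability
\[
    \textstyle \sum_t \big(r_t(a_t) + \lambda_t(B/T - c_t(a_t))\big) \geq \sum_t \big(r_t(x^*_t) + \lambda_t(B/T - c_t(x^*_t))\big) - \alpha\cdot O(\sqrt{T|A|\log|F|}\log(1/\delta)).
\]
Rearranging, $\alg$ (before hard-stopping) is at least $\opt_{D,F} + \sum_t \lambda_t(B/T - c_t(x^*_t)) - (\text{primal regret}) - \sum_t \lambda_t(B/T - c_t(a_t))$. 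The last term is controlled because $\alg_D$ is OGD on the losses $\lambda_t(B/T - c_t(a_t))$: against the fixed action $\lambda = 0$ the OGD regret bound gives $\sum_t \lambda_t(B/T - c_t(a_t)) \le O(\overline\lambda\sqrt{\max\{D,T\}}) = O(\alpha\sqrt{D} + \alpha\sqrt{T})$ — this is exactly why the step size is tuned to $\overline\lambda/\sqrt{\max\{D,T\}}$.

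The heart of the argument is to show the ``benchmark slack'' term $\sum_t \lambda_t(B/T - c_t(x^*_t))$ is not too negative, i.e. at least $-O(\alpha\sqrt{D})$. Here is where the EMD enters. Write $\sum_t \lambda_t(B/T - c_t(x^*_t)) = \sum_t \lambda_t(B/T - d_t) + \sum_t \lambda_t(d_t - c_t(x^*_t))$. Since $d_t \le B/T$ for all $t$ and $\lambda_t \ge 0$, the first sum is $\ge 0$. For the second sum, I would use Abel summation / summation by parts to rewrite $\sum_t \lambda_t(d_t - c_t(x^*_t)) = \sum_t (\lambda_t - \lambda_{t+1})\sum_{s\le t}(d_s - c_s(x^*_s)) + \lambda_{T+1}\sum_{s\le T}(d_s - c_s(x^*_s))$; the boundary term vanishes because $\sum_s d_s = \sum_s c_s(x^*_s)$ by definition of $G(D)$. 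Thus this sum is bounded in absolute value by $\big(\max_t |\lambda_t - \lambda_{t+1}|\big)\cdot \sum_t \big|\sum_{s\le t}(d_s - c_s(x^*_s))\big| \le \eta \cdot \mathrm{EMD}((c_t(x^*_t))_t,(d_t)_t) \le \eta D$, using that OGD with step size $\eta$ and unit-bounded gradients changes its iterate by at most $O(\eta)$ per step (the gradient is $B/T - c_t(a_t) \in [-1,1]$ up to the $\lambda$-scaling — actually the loss gradient w.r.t.\ $\lambda$ is exactly $B/T - c_t(a_t)$, bounded by $1$). Plugging $\eta = \alpha/\sqrt{\max\{D,T\}}$ gives $\eta D \le \alpha\sqrt{D}$, as desired.

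The remaining step is to handle hard-stopping: the above bounds $\alg$ counting rewards only up to the (random) stopping time $\tau$ where the budget runs out, so I must argue that stopping early costs little. The standard trick, which works here, is that if the algorithm stops at $\tau < T$ then it has spent essentially all of $B$, so $\sum_{t\le\tau} c_t(a_t) \ge B - 1$; combined with the bound $\sum_t \lambda_t(B/T - c_t(a_t)) = O(\alpha\sqrt{\max\{D,T\}})$ and nonnegativity of $\lambda_t$, together with the reward-to-cost assumption $r_t(a) \le \alpha c_t(a)$ (so the foregone future reward over $[\tau+1,T]$ is at most $\alpha$ times the foregone future budget $B(T-\tau)/T$, which is itself small when the budget is nearly exhausted), one shows the reward lost after $\tau$ is $O(\alpha\sqrt{\max\{D,T\}})$; the $\alpha c_t$ domination is precisely what makes ``running out of budget'' never much worse than ``the benchmark also could not have spent more.'' Assembling the three pieces and absorbing $\sqrt{T}\le\sqrt{T|A|\log|F|}$ into the primal term yields $\opt_{D,F} - \alg \le \alpha\cdot O(\sqrt{T|A|\log|F|}\log(1/\delta) + \sqrt{D})$. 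I expect the hard-stopping bookkeeping — making the ``lost future reward'' argument fully rigorous with the random $\tau$ and the high-probability primal/dual regret bounds holding simultaneously for all $\delta$ — to be the most delicate part; the EMD/Abel-summation core is clean once the step size is chosen correctly.
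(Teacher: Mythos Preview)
Your treatment of the primal regret and of the EMD/Abel-summation bound on $\sum_t \lambda_t(B/T-c_t(x^*_t))$ is essentially identical to the paper's. The gap is in how you couple the dual term with hard-stopping.

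You invoke the OGD regret bound against the comparator $\lambda^*=0$ to obtain $\sum_t \lambda_t(B/T-c_t(a_t))\le O(\alpha\sqrt{\max\{D,T\}})$, and then attempt to bound the post-stopping virtual reward $\sum_{t>\tau} r_t(a_t)$ separately. The mechanism you propose for the second bound does not work: the claim that the foregone reward is at most $\alpha\cdot B(T-\tau)/T$ does not follow from $r_t\le\alpha c_t$, because the virtual continuation of the primal player after $\tau$ is under no budget constraint and can spend (hence earn) up to $\Theta(T)$; and $B(T-\tau)/T$ need not be small, since nothing forces $\tau$ to be close to $T$. More structurally, once you have replaced $\sum_t \lambda_t(B/T-c_t(a_t))$ by its upper bound you can no longer control the overspend $\sum_t c_t(a_t)-B$: OGD gives only one-sided (upper) bounds on the cumulative dual loss, so the needed lower bound is simply unavailable after you throw the term away.

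The paper avoids this by using the OGD regret against $\lambda^*=\overline\lambda$ rather than $0$, and by \emph{not} separating the two terms. From $\sum_{t>\tau} r_t(a_t)\le\alpha\sum_{t>\tau} c_t(a_t)\le\overline\lambda\bigl(\sum_t c_t(a_t)-B\bigr)$ (by the definition of the stopping time), one gets
\[
\sum_t \lambda_t\bigl(B/T-c_t(a_t)\bigr)+\sum_{t>\tau} r_t(a_t)\;\le\;\sum_t \lambda_t\bigl(B/T-c_t(a_t)\bigr)-\overline\lambda\Bigl(B-\sum_t c_t(a_t)\Bigr)\;\le\;\frac{\overline\lambda^{2}}{\eta}+\eta T,
\]
so the dual-loss term and the hard-stopping term cancel against each other in a single application of the dual regret bound. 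Your instinct that the hard-stopping bookkeeping is the delicate part was right, but the fix is the choice of dual comparator and keeping the two terms together, not a standalone bound on the stopped reward.
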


\noindent\textbf{Remark} 
If one uses EXP4.P \citep{pmlr-v15-beygelzimer11a} instead of EXP4-IX in \texttt{LagrangianEMD}, then one can guarantee that the regret is at most $\alpha \cdot O(\sqrt{T \abs{A} \log (\abs{F} / \delta)} + \sqrt{D})$ with probability $1 - \delta$ for a pre-specified $\delta$ (but not for all $\delta$ simultaneously).

\begin{proof}
Let $x^* \coloneq \arg\max_{x \in F \cap G(D)} \sum_{t=1}^T r_t(x_t)$ denote the optimal strategy in $F \cap G(D)$, and let $a \coloneq (a_t)_t$ denote the actions output by \texttt{LagrangianEMD}.
The regret guarantee of $\alg_P$ implies that with probability at least $1 - \delta$,
\begin{equation*}
    \textstyle \sum\limits_{t = 1}^{T}  (r_t(x^*_t) + \lambda_t(B/T - c_t(x^*_t)) - \sum\limits_{t = 1}^{T}  (r_t(a_t) + \lambda_t(B/T - c_t(a_t))) \leq O(\overline{\lambda}\sqrt{T \abs{A} \log \abs{F}} \cdot \log(1/\delta))
\end{equation*}
Now, let $(d_t)_t$ denote the sub-pacing spending pattern that cumulatively spends exactly as much as $x^*$ and is within EMD $D$ of $x^*$ (which exists because $x^* \in F \cap G(D)$).
As a sub-pacing spending pattern, $d_t \in [0,B/T]$ for all time steps $t$.
For each time step $t$, we also have that $\abs{\lambda_{t+1} - \lambda_t} \leq \eta$ since $\alg_D$ is OGD with step size $\eta$.
It follows that
\begin{align*}
    \textstyle \sum_{t=1}^T \lambda_t(c_t(x^*_t) - B/T)
        &\leq \textstyle \sum_{t=1}^T \lambda_t(c_t(x^*_t) - d_t) \tag{$\lambda_t \geq 0$, $d_t \leq B/T$} \\
        &= \textstyle \lambda_T \sum_{t=1}^T (c_t(x^*_t) - d_t) - \sum_{t=1}^{T-1} (\lambda_{t+1} - \lambda_t) \sum_{s = 1}^t (c_s(x^*_s) - d_s) \tag{summation by parts} \\
        &= \textstyle \sum_{t=1}^{T-1} (\lambda_{t+1} - \lambda_t) \sum_{s = 1}^t (d_s - c_s(x^*_s)) \tag{$\sum_{t=1}^T c_t(x^*_t) = \sum_{t=1}^T d_t$} \\
        &\leq \textstyle \eta \sum_{t=1}^{T-1} \abs{\sum_{s = 1}^t (c_s(x^*_s) - d_s)} \tag{H\"{o}lder's, $\abs{\lambda_{t+1} - \lambda_t} \leq \eta$} \\
        &= \textstyle \eta \cdot \mathrm{EMD}((c_t(x^*_t)_t, (d_t)_t) \tag{$\sum_{t=1}^T c_t(x^*_t) = \sum_{t=1}^T d_t$} \\
        &\leq \textstyle \eta D \tag{$\mathrm{EMD}((c_t(x^*_t)_t, (d_t)_t) \leq D$}
\end{align*}
Rearranging the regret guarantee of $\alg_P$ and applying the above inequality yields that with probability at least $1 - \delta$,
\begin{equation}\label{equation:algorithm-reward}
    \textstyle \sum\limits_{t = 1}^{T}  r_t(x^*_t) - \sum\limits_{t=1}^T r_t(a_t) \leq  \eta D + \sum\limits_{t = 1}^{T} \lambda_t (B/T - c_t(a_t))) + O(\overline{\lambda}\sqrt{T \abs{A} \log \abs{F}} \cdot \log(1/\delta))
\end{equation}
But our expected total reward is not $\sum_{t=1}^T r_t(a_t)$ since we stop choosing the action recommended by $\texttt{LagrangianEMD}$ and choose the null action instead after spending the entire budget (in expectation).
To bound the loss in reward, we appeal to the regret guarantee of $\alg_D$, which, recall, is OGD over $[0, \overline{\lambda}]$ with step size $\eta$.
OGD with these parameters guarantees that
\begin{equation}\label{equation:dual-regret-EMD}
    \textstyle \sum_t \lambda_t(B/T - c_t(a_t)) - \overline{\lambda} \left(B-\sum_t c_t(a_t)\right) \leq \frac{\overline{\lambda}^2}{\eta} + \eta T
\end{equation}
Now, let $S$ denote the last time step before the expenditure of \texttt{LagrangianEMD} exceeds the budget $B$.
That is, $\sum_{t=1}^{S} c_t(a_t) \leq B$ but $\sum_{t=1}^{S+1} c_t(a_t) > B$.
Note that our strategy is $(a_1, \dots, a_S, \perp, \dots, \perp)$, which we denote as $(a_{1 : S}, \perp)$.
Thus, with probability at least $1 - \delta$ and up to an additive $O(\overline{\lambda}\sqrt{T \abs{A} \log \abs{F}} \cdot \log(1/\delta))$ factor, our regret is
\begin{align*}
    \textstyle \reg_{D,F} ((a_{1:S}, \perp))
        &= \textstyle \sum_{t = 1}^{T}  r_t(x^*_t) - \sum_{t=1}^S r_t(a_t) \\
        &\leq \textstyle \eta D + \sum_{t = 1}^{T} \lambda_t \Paren{\tfrac{B}{T} - c_t(a_t)} + \sum_{t=S+1}^T r_t(a_t) \tag{Equation~\ref{equation:algorithm-reward}} \\
        &\leq \textstyle \eta D + \sum_{t = 1}^{T} \lambda_t \Paren{\tfrac{B}{T} - c_t(a_t)} + \overline{\lambda} \sum_{t=S+1}^T c_t(a_t) \tag{$r_t \leq \alpha \cdot c_t$; $\overline{\lambda} = \alpha$} \\
        &\leq \textstyle \eta D + \sum_{t = 1}^{T} \lambda_t \Paren{\tfrac{B}{T} - c_t(a_t)} - \overline{\lambda} \Paren{B - \sum_{t=1}^T c_t(a_t)} \tag{def. of $S$} \\
        &\leq \textstyle \eta D + \frac{\overline{\lambda}^2}{\eta} + \eta T \tag{Equation~\ref{equation:dual-regret-EMD}} \\
        &\leq \textstyle O(\overline{\lambda} \sqrt{\max\{D, T\}}) \tag{$\eta = \overline{\lambda} / \sqrt{\max\{D, T\}}$}
\end{align*}
\end{proof}

\begin{corollary}
Let $D \geq 0$ and $F \subseteq \Delta(A)^T$.
If there exists a scalar $\alpha \geq 0$ such that $r_t(a) \leq \alpha \cdot c_t(a)$ for all time steps $t$ and actions $a$, then the expected regret against $\opt_{D,F}$ of choosing actions according to $\texttt{LagrangianEMD}(D, F)$ with $\overline{\lambda} = \alpha$ until no budget remains (and then choosing the null action for the remaining time steps) is at most $\alpha \cdot O(\sqrt{T \abs{A} \log \abs{F}} + \sqrt{D})$.
\end{corollary}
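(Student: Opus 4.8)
The plan is to obtain the in-expectation bound by integrating the tail of the high-probability guarantee of Theorem~\ref{theorem:EMD-main-result}; no new analysis of \texttt{LagrangianEMD} is needed. Write $R \coloneq \reg_{D,F}((a_{1:S}, \perp))$ for the (random) regret of a single run of $\texttt{LagrangianEMD}(D,F)$ with $\overline{\lambda} = \alpha$ (run until the budget is exhausted and then playing $\perp$); since $0 \le \opt_{D,F} \le T$ and $\sum_{t=1}^{S} r_t(a_t) \in [0, T]$, we have $R \in [-T, T]$ and all the expectations below are finite. Because EXP4-IX takes no confidence parameter, the \emph{same} run is the subject of Theorem~\ref{theorem:EMD-main-result} at every confidence level, so there is an absolute constant $c$ such that, for all $\delta \in (0,1)$,
\[
    \Pr\Bigl[\, R > c\alpha\bigl(\sqrt{T\abs{A}\log\abs{F}}\,\log(1/\delta) + \sqrt{D}\,\bigr)\Bigr] \le \delta .
\]
Setting $s \coloneq c\alpha\sqrt{T\abs{A}\log\abs{F}}$ and $M \coloneq c\alpha\sqrt{D}$, this reads $\Pr[R > M + s\log(1/\delta)] \le \delta$ for all $\delta \in (0,1)$.

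Next I would reparametrize by $u \coloneq \log(1/\delta) \ge 0$, which turns the tail bound into $\Pr[(R-M)^+ > su] \le e^{-u}$ for all $u \ge 0$ (the value $u = 0$ being the trivial bound $\Pr[(R-M)^+ > 0] \le 1$). Then, by the layer-cake identity for the nonnegative random variable $(R-M)^+$ together with the substitution $t = su$,
\[
    \EE\bigl[(R-M)^+\bigr] = \int_0^\infty \Pr\bigl[(R-M)^+ > t\bigr]\,\mathrm{d}t = s\int_0^\infty \Pr\bigl[(R-M)^+ > su\bigr]\,\mathrm{d}u \le s\int_0^\infty e^{-u}\,\mathrm{d}u = s .
\]
Since $R \le M + (R-M)^+$ pointwise, taking expectations gives $\EE[R] \le M + s = c\alpha\bigl(\sqrt{T\abs{A}\log\abs{F}} + \sqrt{D}\bigr) = \alpha\cdot O\bigl(\sqrt{T\abs{A}\log\abs{F}} + \sqrt{D}\bigr)$, which is exactly the claimed bound.

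There is no real obstacle here; the only point that requires care is that one must exploit the fact that Theorem~\ref{theorem:EMD-main-result} holds uniformly over $\delta$ for a single algorithm run. A cruder argument that fixes, say, $\delta = 1/T$ and uses $R \le T$ on the complementary event would only yield $\EE[R] \le \alpha\cdot O\bigl(\sqrt{T\abs{A}\log\abs{F}}\,\log T + \sqrt{D}\bigr) + 1$, i.e. an extra $\log T$ factor; integrating the uniform tail is what removes it. If instead one uses the EXP4.P variant from the Remark, whose bound is $\alpha\cdot O\bigl(\sqrt{T\abs{A}\log(\abs{F}/\delta)} + \sqrt{D}\bigr) \le \alpha\cdot O\bigl(\sqrt{T\abs{A}\log\abs{F}} + \sqrt{T\abs{A}}\,\sqrt{\log(1/\delta)} + \sqrt{D}\bigr)$, the same computation with $\int_0^\infty e^{-u^2}\,\mathrm{d}u = \tfrac{\sqrt{\pi}}{2}$ in place of $\int_0^\infty e^{-u}\,\mathrm{d}u$ gives the identical conclusion.
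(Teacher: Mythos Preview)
Your proof is correct and takes essentially the same approach as the paper: both integrate the uniform-in-$\delta$ tail bound of Theorem~\ref{theorem:EMD-main-result} and use $\int_0^1 \log(1/\delta)\,\mathrm{d}\delta = 1$. The only cosmetic difference is that the paper writes the expectation as the quantile integral $\EE[R] = \int_0^1 \varphi^{-1}(1-\delta)\,\mathrm{d}\delta$ and bounds $\varphi^{-1}(1-\delta)$ directly, whereas you split off $M$ and apply the layer-cake formula to $(R-M)^+$; these are the same computation.
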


\begin{proof}
Let $S$ denote the last time step before the expenditure of \texttt{LagrangianEMD} exceeds the budget $B$, and let $\varphi(x) \coloneq \PP[\reg_{D,F} ((a_{1:S}, \perp)) \leq x]$ denote the CDF of $\reg_{D,F} ((a_{1:S}, \perp))$.
By Theorem~\ref{theorem:EMD-main-result}, we can bound the expected regret as follows.
\begin{align*}
    \textstyle \EE[\reg_{D,F} ((a_{1:S}, \perp))]
        &= \int_0^1 \varphi^{-1}(1 - \delta) \,\mathrm{d}\delta \\
        &\leq \int_0^1  \alpha \cdot O(\sqrt{T \abs{A} \log \abs{F}} \cdot \log(1/\delta) + \sqrt{D}) \,\mathrm{d}\delta \\
        &= \alpha \cdot O(\sqrt{T \abs{A} \log \abs{F}} + \sqrt{D})
\end{align*}

\end{proof}

\section{Applications of Algorithm \texttt{LagrangianEMD}}

In this section, we show that \texttt{LagrangianEMD}
achieves non-trivial regret guarantees against the pacing benchmark of \cite{liakopoulos2019cautious} in our setting.
We emphasize that the purpose of this exercise is not to compare our regret bound against theirs (our results are incomparable because our settings are different) but to demonstrate the applicability of our algorithm.
We further demonstrate its potential as a general solution to BwK problems with one resource by proving that it achieves optimal regret guarantees for a new pacing benchmark.

\subsection{Best Fixed Distribution and Sliding Window Budget Constraint}

In this section, we show that \texttt{LagrangianEMD} obtains a non-trivial regret guarantee against the following pacing benchmark.
\begin{align*}
     \textstyle \textsc{Opt} \coloneq \max_{x \in \Delta(A)} \left\{\sum_{t = 1}^{T} r_t(x) : \forall\, k=1, \dots, T-w+1, \sum_{t = k}^{k+w-1}  c_t(x) \leq Bw/T\right\}
\end{align*}
In other words, $\opt$ is the reward obtained by the best fixed strategy that spends at most $Bw/T$ in every interval of length $w$.
We show that the set of such strategies is within EMD $O(wT)$ of sub-pacing by proving an even stronger result: the set of all strategies that spend at most $Bw/T$ in each interval $[kw+1, (k+1)w]$ where $k = 0, \dots, T/w-1$ is within EMD $O(wT)$ of sub-pacing.

\begin{lemma}\label{lemma:window-benchmark-strategies-within-EMD-wT}
Let $\mathcal{F} \subseteq \Delta(A)^T$ denote the set of all strategies that spend at most $Bw/T$ in each interval $[kw+1, (k+1)w]$ where $k = 0, \dots, T/w-1$.
Then, $\mathcal{F} \subseteq G(O(wT))$.
\end{lemma}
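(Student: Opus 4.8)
The goal is to show that any strategy $x = (x_t)_t$ which spends at most $Bw/T$ in each block $[kw+1,(k+1)w]$ lies in $G(O(wT))$. By definition of $G(\cdot)$, I must exhibit a sub-pacing spending pattern $(d_t)_t$ with $d_t \in [0, B/T]$, with $\sum_t d_t = \sum_t c_t(x_t)$, and with $\mathrm{EMD}((c_t(x_t))_t, (d_t)_t) \leq O(wT)$. The natural choice is to \emph{flatten each block}: within block $k$, let $m_k := \sum_{t=kw+1}^{(k+1)w} c_t(x_t)$ be the block's total spend, and set $d_t := m_k/w$ for every $t$ in block $k$. Since $m_k \leq Bw/T$ by hypothesis, we get $d_t = m_k/w \leq B/T$, so $(d_t)_t$ is sub-pacing; and the block totals match exactly, so the overall cumulative sums agree, $\sum_t d_t = \sum_t c_t(x_t)$.

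It remains to bound $\mathrm{EMD}((c_t(x_t))_t,(d_t)_t) = \sum_{t=1}^{T} \bigl|\sum_{s=1}^{t}(c_s(x_s) - d_s)\bigr|$. The key observation is that the partial-sum differences vanish at every block boundary: at $t = kw$ the cumulative discrepancy is $0$ because the first $k$ blocks have matching totals. So within block $k$, writing $t = kw + j$ with $1 \le j \le w$, the quantity $\sum_{s=1}^{t}(c_s(x_s)-d_s)$ equals $\sum_{s=kw+1}^{kw+j}(c_s(x_s) - m_k/w)$, a partial sum of $w$ terms each lying in $[-B/T, 1]$ (since $c_s(x_s) \in [0,1]$ and $d_s \le B/T \le 1$). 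Hence each such partial sum is bounded in absolute value by $w$ (crudely, since each term is at most $1$ in magnitude when $B \le T$), so the contribution of block $k$ to the EMD is at most $w \cdot w = w^2$. Summing over the $T/w$ blocks gives $\mathrm{EMD} \leq (T/w) \cdot w^2 = wT$, as desired. Finally, since $\mathcal{F}$ as defined is exactly the set of strategies satisfying the block constraint, every $x \in \mathcal{F}$ admits this $(d_t)_t$, so $\mathcal{F} \subseteq G(wT) \subseteq G(O(wT))$.

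The only subtlety — and the "main obstacle," though it is minor — is getting the right constant/order in the per-block bound: one wants the partial sums inside a block to stay $O(w)$, which follows because both $c_s(x_s)$ and $d_s$ lie in $[0,1]$ (using $B \le T$), making each summand at most $1$ in absolute value and each of the $w$ partial sums at most $w$. A slightly tighter argument could note that the partial sums are telescoping toward $0$ at the block's end, but the crude $O(w^2)$ per block already suffices for the claimed $O(wT)$ bound. No other step requires care; the matching of cumulative totals at block boundaries is what makes the EMD sum decompose cleanly block by block.
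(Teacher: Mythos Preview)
Your proposal is correct and follows essentially the same approach as the paper: construct the sub-pacing pattern by flattening each block to its average spend $m_k/w$, observe that cumulative discrepancies vanish at block boundaries, and bound the within-block partial sums using $|c_s(x_s)-d_s|\le 1$. The paper's computation is marginally tighter (bounding the $j$-th partial sum in a block by $j$ rather than by $w$, yielding $\sum_{j=1}^w j \le wT$ overall), but this makes no difference at the $O(wT)$ level and you already note this refinement yourself.
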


\begin{proof}
Let $x \in \mathcal{F}$, and for all $k = 0, \dots, T/w-1$, let $C_k \coloneq \sum_{t=kw+1}^{(k+1)w} c_t(x_t)$ denote how much $x$ spends in $[kw+1, (k+1)w]$.
Since $x \in \mathcal{F}$, we know that $C_k \leq Bw/T$.
Thus, $(d_t)_t$ where $d_t = C_{\ceil{t/w} - 1} / w$ is a sub-pacing spending pattern such that $\sum_{t=1}^T c_t(x_t) = \sum_{t=1}^T d_t$.
It remains to show that the EMD between $(c_t(x_t))_t$ and $(d_t)_t$ is at most $O(wT)$.
\begin{align*}
    \textstyle \mathrm{EMD}((c_t(x_t))_t, (d_t)_t)
        &= \textstyle \sum_{t=1}^T \abs{\sum_{s=1}^t (c_s(x_s) - d_s)} \\
        &= \textstyle \sum_{t=1}^T \abs{\sum_{s = (\ceil{t/w} - 1)w + 1}^t (c_s(x_s) - d_s)} \tag{$C_k = \sum_{t=kw+1}^{(k+1)w} d_t$} \\
        &\leq \textstyle \sum_{t=1}^T (t - (\ceil{t/w} - 1)w) \tag{$c_t \in [0,1]^A$; $d_t \in [0, 1]$} \\
        &\leq wT
\end{align*}
\end{proof}

\begin{theorem}
Let $\Delta(A)_\varepsilon$ denote an $\varepsilon$-cover (with respect to the $\ell_\infty$ norm) of $\Delta(A)$ of minimum size, and let $F \subseteq \Delta(A)_{\varepsilon / \abs{A}}^T$ denote the set of fixed distributions over actions.
If there exists $\alpha \geq 0$ such that $r_t(a) \leq \alpha \cdot c_t(a)$ for all time steps $t$ and actions $a$, then the regret against $\opt$ of choosing actions according to $\texttt{LagrangianEMD}(F)$ with $D = O(wT)$ and $\overline{\lambda} = \alpha$ until no budget remains (and then choosing the null action for the remaining time steps) is at most $\alpha \cdot O(\abs{A} \sqrt{T\log(\abs{A}/\varepsilon)} \cdot \log(1/\delta) + \sqrt{wT} + \varepsilon T)$ with probability at least $1 - \delta$ for all $\delta$ simultaneously.
If $\varepsilon = \sqrt{w/T}$, then the regret is at most $\alpha \cdot O(\abs{A} \sqrt{T \log(T \cdot \abs{A} / w)} \cdot \log(1/\delta) + \sqrt{wT})$ with probability at least $1 - \delta$ for all $\delta$ simultaneously.
\end{theorem}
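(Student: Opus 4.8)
The plan is to reduce the claim to Theorem~\ref{theorem:EMD-main-result} applied with permitted distance $D = O(wT)$ (the value supplied by Lemma~\ref{lemma:window-benchmark-strategies-within-EMD-wT}), together with a discretization step that lets us pass from the continuous benchmark $\opt$, which is a maximum over all of $\Delta(A)$, to the benchmark $\opt_{D,F}$ that Theorem~\ref{theorem:EMD-main-result} actually competes against. Concretely I would proceed in three steps: (i) exhibit a fixed distribution $\hat{x} \in F$ whose constant spending pattern still lies in $G(O(wT))$ and whose reward is within $O(\varepsilon T)$ of $\opt$, so that $\opt_{O(wT),F} \ge \opt - O(\varepsilon T)$; (ii) invoke Theorem~\ref{theorem:EMD-main-result} with $\overline{\lambda} = \alpha$ and $D = O(wT)$ to bound the regret against $\opt_{O(wT),F}$; (iii) add the two estimates, bound $\log\abs{F}$ in terms of $\abs{A}$ and $\varepsilon$, and finally substitute $\varepsilon = \sqrt{w/T}$.

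For step (i), the key is to round $x^{*}$, the maximizer defining $\opt$, \emph{downward}. Writing $\perp$ for the free null action (for which $r_t(\perp) = c_t(\perp) = 0$), round each coordinate $x^{*}(a)$ with $a \neq \perp$ down to the nearest multiple of $\varepsilon/\abs{A}$ and place the leftover mass on $\perp$; since a grid of this resolution is an $\ell_\infty$-cover of $\Delta(A)$ of size $(\abs{A}/\varepsilon)^{O(\abs{A})}$, we may take the resulting $\hat{x}$ to lie in $F$. Because $\perp$ is costless, decreasing the non-null coordinates only decreases the per-round cost, so $c_t(\hat{x}) \le c_t(x^{*})$ for every $t$; in particular $\hat{x}$ spends at most $Bw/T$ in each window of length $w$ (inheriting this from $x^{*}$), so by the proof of Lemma~\ref{lemma:window-benchmark-strategies-within-EMD-wT} its spending pattern lies in $G(O(wT))$. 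Because $\perp$ is also rewardless, $r_t(\hat{x}) = \sum_{a \neq \perp} r_t(a)\hat{x}(a) \ge r_t(x^{*}) - \abs{A}\cdot(\varepsilon/\abs{A}) = r_t(x^{*}) - \varepsilon$, so $\sum_t r_t(\hat{x}) \ge \opt - \varepsilon T$ and hence $\opt_{O(wT),F} \ge \opt - \varepsilon T$.

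Step (ii) is immediate: with $\overline{\lambda} = \alpha$ and $D = O(wT)$, Theorem~\ref{theorem:EMD-main-result} bounds the regret of $\texttt{LagrangianEMD}(O(wT), F)$ against $\opt_{O(wT),F}$ by $\alpha\cdot O(\sqrt{T\abs{A}\log\abs{F}}\cdot\log(1/\delta) + \sqrt{wT})$ with probability $1-\delta$ for all $\delta$ simultaneously. Combining with step (i), $\reg_{\opt} \le \reg_{\opt_{O(wT),F}} + (\opt - \opt_{O(wT),F}) \le \alpha\cdot O(\sqrt{T\abs{A}\log\abs{F}}\cdot\log(1/\delta) + \sqrt{wT} + \varepsilon T)$. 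Since $\abs{F} \le (\abs{A}/\varepsilon)^{O(\abs{A})}$ we get $\log\abs{F} = O(\abs{A}\log(\abs{A}/\varepsilon))$, hence $\sqrt{T\abs{A}\log\abs{F}} = O(\abs{A}\sqrt{T\log(\abs{A}/\varepsilon)})$, which is the first displayed bound. Taking $\varepsilon = \sqrt{w/T}$ makes $\varepsilon T = \sqrt{wT}$ (absorbed into the $\sqrt{wT}$ term) and $\log(\abs{A}/\varepsilon) = O(\log(T\abs{A}/w))$, yielding the second displayed bound.

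The one point to be careful about is step (i): a naive nearest-point rounding of $x^{*}$ onto a generic $\ell_\infty$-cover could increase the per-round cost by up to $\varepsilon$, which over a single window (let alone the whole horizon) may push the rounded strategy over budget — and then its cumulative spending cannot match that of any sub-pacing pattern, so it falls outside $G(D)$ altogether. Exploiting that $\perp$ is simultaneously free and rewardless, so that we can round \emph{down} and dump the slack on $\perp$, is exactly what keeps the rounded strategy window-feasible (hence in $G(O(wT))$) while losing only $O(\varepsilon T)$ in reward; everything else is bookkeeping.
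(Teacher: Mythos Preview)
Your proposal is correct and follows essentially the same argument as the paper: round the optimizer $x^{*}$ downward onto the cover (shifting the slack to the null action so that costs can only decrease), invoke Lemma~\ref{lemma:window-benchmark-strategies-within-EMD-wT} to place the rounded strategy in $G(O(wT))$, apply Theorem~\ref{theorem:EMD-main-result}, and bound $\log\abs{F} = O(\abs{A}\log(\abs{A}/\varepsilon))$. The paper's proof is identical up to the constant in the discretization error (it states $2\varepsilon T$ via $x^{*}_t(a)-x_t(a)\in[0,2\varepsilon/\abs{A}]$), and you correctly flag the one nontrivial point---that naive nearest-point rounding could break window feasibility and hence membership in $G(D)$---which the paper handles the same way you do.
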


\begin{proof}
Let $F' \subseteq \Delta(A)^T$ denote the set of fixed distributions that spend at most $Bw/T$ in every interval of length $w$.
Let $x^* = \arg\max_{x \in F'} \sum_t r_t(x_t)$ denote the optimal fixed distribution ($x^*_t$ is constant over $t\in [T]$), and let $x \in F$ denote a strategy such that $x^*_t(a) - x_t(a) \in [0, 2\varepsilon/\abs{A}]$ for all time steps $t$ and actions $a$ (put the remaining mass on the null arm).
Note that $\textstyle \sum_{t=1}^T r_t(x_t) \geq \sum_{t=1}^T r_t(x^*_t) - 2 \varepsilon T$, so it suffices to bound our regret against $x$.

Since $c_t(x_t) \leq c_t(x_t^*)$ for all $t$, we have that $x \in F'$.
By Lemma~\ref{lemma:window-benchmark-strategies-within-EMD-wT}, this implies that $x \in G(O(wT))$ as well.
Since $\abs{F} = O((\varepsilon/\abs{A})^{-\abs{A}})$, we have by Theorem~\ref{theorem:EMD-main-result} that the regret against $x$ of choosing actions according to $\texttt{LagrangianEMD}(F)$ with $D = O(wT)$ and $\overline{\lambda} = \alpha$ until no budget remains (and then choosing the null action for the remaining time steps) is at most $\alpha \cdot O(\abs{A} \sqrt{T\log(\abs{A}/(\varepsilon\delta))} + \sqrt{wT})$ with probability at least $1 - \delta$.
\end{proof}

\subsection{Best Fixed Distribution in Each Disjoint Interval and Per-Interval Budget Constraints}
\label{sec:benchmark-interval}

In this section, we show that \texttt{LagrangianEMD} achieves the optimal regret guarantee (up to logarithmic terms)  against the following new pacing benchmark.
\begin{align*}
     \textstyle \textsc{Opt} \coloneq \sum_{k=0}^{T/w - 1} \max_{x \in \Delta(A)} \left\{\sum_{t = kw+1}^{(k+1)w} r_t(x) : \sum_{t = kw+1}^{(k+1)w}  c_t(x) \leq Bw/T\right\}
\end{align*}
That is, divide the entire time horizon of length $T$ into $T/w$ disjoint intervals of length $w$.
$\opt$ is the reward obtained by the strategy that in each interval, chooses actions according to the best fixed distribution over actions subject to spending at most a $w/T$ fraction of the budget.

\begin{theorem}
\label{thm:window-EMD-algo}
Let $\Delta(A)_\varepsilon$ denote an $\varepsilon$-cover (with respect to the $\ell_\infty$ norm) of $\Delta(A)$ of minimum size, and let $F \subseteq \Delta(A)_{\varepsilon / \abs{A}}^T$ denote the set of strategies that choose actions according to a fixed distribution in $\Delta(A)_{\varepsilon / \abs{A}}$ in each disjoint interval of length $w$.
If there exists $\alpha \geq 0$ such that $r_t(a) \leq \alpha \cdot c_t(a)$ for all time steps $t$ and actions $a$, then the regret against $\opt$ of choosing actions according to $\texttt{LagrangianEMD}(F)$ with $D = O(wT)$ and $\overline{\lambda} = \alpha$ until no budget remains (and then choosing the null action for the remaining time steps) is at most $\alpha \cdot O(\abs{A} T \sqrt{\log(\abs{A}/\varepsilon) / w} \cdot \log(1/\delta) + \sqrt{wT} + \varepsilon T)$ with probability at least $1 - \delta$ for all $\delta$ simultaneously.
If $\varepsilon = \sqrt{w/T}$, then the regret $\leq \alpha \cdot O(\abs{A} T \sqrt{\log(T \cdot \abs{A} / w) / w} \cdot \log(1/\delta) + \sqrt{wT})$ with probability at least $1 - \delta$ for all $\delta$ simultaneously.
\end{theorem}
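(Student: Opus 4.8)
The plan is to reduce directly to Theorem~\ref{theorem:EMD-main-result} by showing that, after discretization, the strategy attaining $\opt$ lies in $F \cap G(O(wT))$. First I would make the benchmark concrete: for each $k = 0,\dots,T/w-1$ let $x^{(k)}\in\Delta(A)$ be the fixed distribution attaining $\max\{\sum_{t=kw+1}^{(k+1)w} r_t(x) : \sum_{t=kw+1}^{(k+1)w} c_t(x) \le Bw/T\}$, and let $x^*$ be the strategy that plays $x^{(k)}$ throughout the $k$-th disjoint interval of length $w$. By construction $\sum_{t=1}^T r_t(x^*_t) = \opt$ and $x^*$ spends at most $Bw/T$ in each such interval, so $x^* \in \mathcal{F}$ in the notation of Lemma~\ref{lemma:window-benchmark-strategies-within-EMD-wT}.

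Next I would discretize, exactly as in the previous theorem: using the minimum-size $\ell_\infty$ $(\varepsilon/\abs{A})$-cover $\Delta(A)_{\varepsilon/\abs{A}}$, for each $k$ choose a cover point so that the resulting strategy $x \in F$ satisfies $x^*_t(a) - x_t(a) \in [0, 2\varepsilon/\abs{A}]$ for all $t$ and $a$ (placing the residual mass on the null action $\perp$). Then $\sum_{t=1}^T r_t(x_t) \ge \opt - 2\varepsilon T$, and since $\perp$ is free we have $c_t(x_t) \le c_t(x^*_t)$ for every $t$; hence $x$ also spends at most $Bw/T$ in each disjoint interval, i.e.\ $x \in \mathcal{F}$. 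By Lemma~\ref{lemma:window-benchmark-strategies-within-EMD-wT}, $\mathcal{F} \subseteq G(O(wT))$, so $x \in F \cap G(O(wT))$, and it suffices to bound the regret of $\texttt{LagrangianEMD}(F)$ against $x$, then add the $O(\varepsilon T)$ discretization loss.

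Finally I would apply Theorem~\ref{theorem:EMD-main-result} with $D = O(wT)$ and $\overline{\lambda} = \alpha$. Because $F$ picks one cover point per disjoint interval, $\abs{F} = \abs{\Delta(A)_{\varepsilon/\abs{A}}}^{T/w}$, and a minimum-size $\ell_\infty$ $\eta$-cover of the simplex $\Delta(A)$ has log-cardinality $O(\abs{A}\log(1/\eta))$, so with $\eta = \varepsilon/\abs{A}$ we get $\log\abs{F} = O\!\left(\tfrac{T}{w}\,\abs{A}\log(\abs{A}/\varepsilon)\right)$. Plugging into $\alpha\cdot O(\sqrt{T\abs{A}\log\abs{F}}\cdot\log(1/\delta) + \sqrt{D})$ gives $\sqrt{T\abs{A}\log\abs{F}} = O(\abs{A}T\sqrt{\log(\abs{A}/\varepsilon)/w})$ and $\sqrt{D} = O(\sqrt{wT})$; adding the $2\varepsilon T$ loss yields the claimed $\alpha\cdot O(\abs{A}T\sqrt{\log(\abs{A}/\varepsilon)/w}\cdot\log(1/\delta) + \sqrt{wT} + \varepsilon T)$, and setting $\varepsilon = \sqrt{w/T}$ makes $\varepsilon T = \sqrt{wT}$ and $\log(\abs{A}/\varepsilon) = O(\log(T\abs{A}/w))$, giving the second bound.

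I do not expect a deep obstacle here; the proof is essentially an instantiation of Theorem~\ref{theorem:EMD-main-result}. The only points needing care are (i) arranging the discretization so that the rounded strategy is coordinatewise dominated by $x^*$ on the non-null actions, which is exactly what lets us recover per-interval budget feasibility and invoke Lemma~\ref{lemma:window-benchmark-strategies-within-EMD-wT}; and (ii) the covering-number bookkeeping that turns $\abs{F}$ and $D = O(wT)$ into the two stated regret expressions. Everything else is routine.
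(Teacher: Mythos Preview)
Your proposal is correct and follows essentially the same approach as the paper: construct the optimal per-interval strategy $x^*$, round each interval's distribution down to a cover point (putting residual mass on $\perp$) to obtain $x\in F$ with $c_t(x_t)\le c_t(x^*_t)$, invoke Lemma~\ref{lemma:window-benchmark-strategies-within-EMD-wT} to place $x$ in $G(O(wT))$, and then apply Theorem~\ref{theorem:EMD-main-result} with the covering-number bound $\log\abs{F}=O((T/w)\abs{A}\log(\abs{A}/\varepsilon))$. Your write-up is in fact slightly more explicit than the paper's about the construction of $x^{(k)}$ and the covering-number arithmetic, but there is no substantive difference.
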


\begin{proof}
Let $F' \subseteq \Delta(A)^{T}$ denote the set of strategies that in each disjoint interval of length $w$, choose actions according to a fixed distribution that spends at most $Bw/T$ in the interval.
Let $x^* = \arg\max_{x \in F'} \sum_t r_t(x_t)$ denote the optimal strategy among such strategies, and let $x \in F$ denote a strategy such that $x^*_t(a) - x_t(a) \in [0, 2\varepsilon/\abs{A}]$ for all time steps $t$ and actions $a$ (put all remaining probability mass on the null arm).
Note that $\sum_{t=1}^T r_t(x_t) \geq \sum_{t=1}^T r_t(x^*_t) - 2 \varepsilon T$.
Thus, it suffices to bound our regret against $x$.

To this end, note that $c_t(x_t) \leq c_t(x_t^*)$ for all $t$, so $x \in F'$.
By Lemma~\ref{lemma:window-benchmark-strategies-within-EMD-wT}, this implies that $x \in G(O(wT))$ as well.
Since $\abs{F} = O((\varepsilon/\abs{A})^{-\abs{A} \cdot T / w})$, we have by Theorem~\ref{theorem:EMD-main-result} that the regret against $x$ of choosing actions according to $\texttt{LagrangianEMD}(F)$ with $D = O(wT)$ and $\overline{\lambda} = \alpha$ until no budget remains (and then choosing the null action for the remaining time steps) is at most $\alpha \cdot O(\abs{A} T \sqrt{\log(\abs{A}/(\varepsilon)) / w} \cdot \log(1/\delta)) + \sqrt{wT})$.
\end{proof}

We now show that the regret guarantee against $\opt$ of $\texttt{LagrangianEMD}$ is optimal with respect to the parameters $w$ and $T$ up to logarithmic terms.

\begin{restatable}{theorem}{windowLowerBound}\label{theorem:lower-bound}
There exists an instance such that for any strategy $x \in \Delta(A)^T$, 
\begin{align*}
    \reg_{\opt}(x) + V_T(x)=\Omega(T/\sqrt{w} + \sqrt{wT})
\end{align*}
In particular, for any budget-feasible strategy $x \in \Delta(A)^T$, that is, $V_T(x) = 0$, 
\[
    \reg_{\opt}(x) =\Omega(T/\sqrt{w} + \sqrt{wT})
\]    
\end{restatable}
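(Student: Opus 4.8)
The plan is to exhibit a randomized two-arm instance whose rewards are driven by a bounded, unbiased random walk, and to show that \emph{every} strategy $x$ satisfies $\reg_{\opt}(x) + V_T(x) = \Omega(T/\sqrt{w} + \sqrt{wT})$ in expectation over the walk, from which a single hard instance follows by averaging over realizations. Since $T/\sqrt{w} + \sqrt{wT} = \Theta(\max\{T/\sqrt{w},\sqrt{wT}\})$, it suffices to force $\Omega(T/\sqrt{w})$ in the regime $w \le \sqrt{T}$ and $\Omega(\sqrt{wT})$ in the regime $w \ge \sqrt{T}$, and we allow the walk's step size to be calibrated differently in the two regimes.

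For the instance, take two (non-null) actions $a_1,a_2$ with cost exactly $1$ at every round and complementary rewards $r_t(a_1) = \tfrac12 + \tfrac12 M_t$ and $r_t(a_2) = \tfrac12 - \tfrac12 M_t$, where $(M_t)_t$ is a $\pm\epsilon$ random walk reflected into $[-1,1]$ (the model's mandatory null action is present but unused), and set $B = T/2$, so that per-window feasibility caps the rate at which any single action can be played at $\tfrac12$. Because the costs are constant and $M_t$ is a martingale, the expected future reward-to-cost ratio of each action equals its current ratio --- the feature that prevents the learner from anticipating future rewards. Evaluating the benchmark: in each length-$w$ interval $I_k$ the best feasible fixed distribution puts rate $\tfrac12$ on whichever action has the larger reward-sum over $I_k$ and nothing elsewhere, so $\opt = \tfrac{T}{4} + \tfrac14\sum_k \big|\sum_{t\in I_k} M_t\big|$; the learner must compete for this window-local ``bonus'' term.

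The $\Omega(T/\sqrt{w})$ bound treats each window as a fresh two-armed bandit: choose $\epsilon$ so that the walk sweeps enough within a window that the sign of $\sum_{t\in I_k} M_t$ is essentially unpredictable from the past and remains ambiguous within $I_k$ until it is nearly over; then identifying the favorable action in $I_k$ is an estimation problem with an information-theoretically unresolvable $\Theta(1/\sqrt{w})$-scale gap over $w$ rounds, and a Le Cam / KL--divergence argument (with a chain rule across windows to absorb the learner's adaptivity) yields $\Omega(\sqrt{w})$ expected regret in a constant fraction of the $T/w$ windows, hence $\Omega(T/\sqrt{w})$; here $V_T$ plays no role. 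The $\Omega(\sqrt{wT})$ bound is where $V_T$ enters: choose $\epsilon$ so the walk is slow, evolving at the window scale. Now the learner can predict the favorable action in most windows, but the walk has $\Theta(\sqrt{n}) = \Theta(\sqrt{T/w})$ ``effective sign changes'', and at each such point the learner must commit how aggressively to spend (and on which action) \emph{before} the change becomes observable; each such commitment is penalized by $\Omega(w)$ either in regret (under-exploiting the favorable windows) or in budget overshoot (over-committing, charged to $V_T$ through the hard-stopping rule). Formalize this with a two-point prior on the walk's future at each potential change point, à la \cite{Immorlica22}, so that the two branches are indistinguishable up to the commitment time; summing gives $\Omega(\sqrt{T/w}\cdot w) = \Omega(\sqrt{wT})$. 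Finally, combine the two regimes and pass from the in-expectation statement to a fixed instance via an averaging argument (a net over strategies plus a robustness/Lipschitz estimate handles the ``for any $x$'' quantifier, or one observes directly that the bound is on the expected regret of the strategy and holds for a typical realization of the walk uniformly over strategies by concentration); the ``in particular'' clause is immediate, since $V_T(x) = 0$ for budget-feasible $x$.

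I expect the $\Omega(\sqrt{wT})$ step to be the main obstacle. The benchmark has only \emph{per-window} budgets while the learner has a single \emph{global} budget, so the learner is a priori \emph{more} flexible at spreading its spending over time; the hardness therefore cannot be ``the learner can't imitate the benchmark's spending pattern'' but must come from the learner not knowing, at the moment it must decide how aggressively to spend, which windows will carry the reward bonus, together with hard-stopping converting over-aggressive guesses into genuine budget violations. Getting the coupling between the budget dynamics and the random-walk reward tight enough to force exactly $\Theta(\sqrt{wT})$ (rather than only $\Theta(\sqrt{T})$, $\Theta(T/w)$, or $\Theta(T)$), and keeping the indistinguishability argument valid against an adaptive bandit learner, is the crux; a secondary technicality is realizing both bounds on instances of exactly the permitted form (constant costs, rewards in $[0,1]$, a null action, hard stopping at budget exhaustion).
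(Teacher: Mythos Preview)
Your construction for the $\Omega(\sqrt{wT})$ regime has a genuine gap: it does not force a spend-or-save dilemma, and in fact the learner can match or exceed $\opt$ on your instance. With both non-null actions costing $1$ at every round and complementary rewards $r_t(a_1)+r_t(a_2)=1$, the learner who plays the action favored by the sign of (her one-step-stale estimate of) $M_t$ with probability $\tfrac12$ and $\perp$ otherwise spends exactly $B=T/2$ and collects approximately $\tfrac14\sum_t(1+|M_t|) = \tfrac T4 + \tfrac14\sum_t |M_t|$. By the triangle inequality this is at least $\tfrac T4 + \tfrac14\sum_k\big|\sum_{t\in I_k}M_t\big| = \opt$, so regret is nonpositive and $V_T=0$. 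The ``two-point prior at each sign change'' heuristic you sketch never bites, because the learner need not commit in advance to a spending rate---both actions are equally available at every round, and feedback from either reveals $M_t$. The same per-round adaptivity undermines your small-window argument: a random walk is predictable one step ahead, so a Le~Cam bound on identifying $\mathrm{sign}\big(\sum_{t\in I_k}M_t\big)$ does not translate into regret.

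The paper's construction fixes this by building temporal asymmetry directly into the actions: in the large-window regime, $a_1$ pays only in the \emph{first half} of each window and $a_2$ only in the \emph{second half}, both costing $1$ when active, so the learner must decide how much to spend on $R_{2k}$ before $R_{2k+1}$ is revealed. A clean martingale (essentially $Q_t = (\text{remaining budget})\cdot R_{\text{current}} + \text{reward so far}$) then shows that \emph{every} adaptive strategy earns exactly $T/4$ in expectation up to the overshoot term, while $\opt = T/4 + \Theta(\varepsilon)\cdot(\text{number of windows before the walk is absorbed})$; tuning $\varepsilon=\sqrt{w/T}$ gives $\Omega(\sqrt{wT})$. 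For the small-window regime the paper simply takes costs to be zero and rewards i.i.d.\ Bernoulli, reducing to the standard $\Omega(\sqrt{w})$ per-window bandit lower bound with no budget interaction at all. Your overall two-regime decomposition and the intuition that a martingale reward process neutralizes save-vs-spend are both correct; what is missing is a mechanism that actually prevents per-round adaptation.
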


\begin{proof}
{\em [Sketch; a complete proof can be found in Appendix~\ref{App:thm5}]}~~The lower bound consists of two terms corresponding to two regimes. 

In the {\em small window} regime $w<\sqrt{T}$,  the dominant term is $T/\sqrt{w}$. Since the benchmark strategy is allowed to change at every $w$-window, the ``usual" $\Omega(\sqrt{w})$ regret per window applies. Specifically, for each window, let there be two strategies, each costing exactly $B/T$ per round (so there are no budget issues at all in this regime), and whose payoff is a random $B_{1/2}$ at each step. 
Since the payoffs are all i.i.d. random, any online strategy will have an expected payoff of $T/2$. 

On the other hand, choosing the better of the two strategies at each $w$ window gives an expected advantage of $\Omega(\sqrt{w})$ over $w/2$. Thus
$$
 \reg_{\opt}(x) = (T/w)\cdot\left(w/2 + \Omega(\sqrt{w})\right) - T/2 = \Omega(T/\sqrt{w}). 
$$

In the {\em large window} regime $w\ge \sqrt{T}$, the dominant term is $\sqrt{wT}$. The benchmark strategies are allowed to distribute their spending arbitrarily within (fairly large) windows of size $w$, and the regret comes from a  save-or-spend dilemma, repeated in each $w$-window. 

As in other save-or-spend scenarios, we will have two possible non-null action costing $2B/T$ per round (so the player can afford to play exactly half the time). Within each window indexed by $i\in\{1,\ldots,T/w\}$ of length $w$, the first action yields reward of $a_i\in[0,1]$ per round in the first half of the interval (and zero afterwards), while the second action yields $b_i\in[0,1]$ in the second half (and zero in the first half). A benchmark strategy is allowed to spend all its money on one of the strategies, thus yielding
$(w/2)\cdot \sum_i \max(a_i,b_i)$.

One could hope that an online strategy, which does not know whether $a_i>b_i$ or the other way around would only be able to yield 
$(w/2)\cdot \sum_i (a_i+b_i)/2$. Unfortunately, this is generally not the case, because the online strategy is not required to be close to a sub-pacing strategy. For example, the online strategy may simply choose to play in all rounds where $a_i$ or $b_i$ are greater than $0.5$ until the budget is exhausted. We overcome this problem using martingales. 

We let $a_1=0.5$ the sequence $a_1,b_1,\ldots,a_{T/w},b_{T/w}$ be an unbiased random walk with step size $\varepsilon$ (to be selected later), terminated at $0$ or $1$. Then, at any point, the expected future value of budget is equal to its current value. Therefore (in expectation) any online strategy is indifferent between saving and spending. Therefore, the expected payoff of the best online strategy {\em where the expectation is taken over both the randomness of the random walk and the randomness of the strategy} is \begin{equation}\label{eq:1}0.5\cdot(T/2B)\cdot B = T/4=(w/2)\cdot \EE\left[\sum_i (a_i+b_i)/2\right]\end{equation}
On the other hand, the optimal benchmark strategy can choose between $a_i$ and $b_i$, and thus its payoff is 
\begin{equation}
    \label{eq:2}
    OPT = (w/2)\cdot \EE\left[\sum_i \max(a_i,b_i)\right]
\end{equation}
Subtraction \eqref{eq:1} from \eqref{eq:2} we obtain a lower bound of $\Omega(w)\cdot\EE\left[ \sum_i |a_i-b_i|\right]$ on the expected regret. Note that $\EE[|a_i-b_i|]$ is step size $\varepsilon$ times the probability that the random walk does not terminate before $2i$ steps. We can select $\varepsilon=\Theta(\sqrt{w/T})$, to ensure that the probability of not terminating in $2i$ steps for any $i\le T/w$ is at least a constant. Then the overall expected regret is at least 
$$
\Omega(w)\cdot (T/w) \cdot \Theta(\sqrt{w/T})\cdot \Omega(1) = \Omega(\sqrt{wT}).
$$

\paragraph{Remark} It should be noted that a larger step size $\varepsilon$ would lead to a higher regret bound per window, but the walk would terminate after $\Theta(1/\varepsilon^2)\ll T/w$ windows. We could still get a larger global regret bound if we were allowed to ``reset" the random walk. This suggests that any algorithm attaining the optimal $O(\sqrt{wT})$ regret in this regime must have long-term memory about past payoffs. 
\end{proof}

\bibliographystyle{plainnat}
\bibliography{references}

\newpage

\appendix

\section{Complete Proof of Theorem~\ref{theorem:lower-bound}}
\label{App:thm5}

\windowLowerBound*

\begin{proof}
We construct two lower bound instances, one for when $w \leq \sqrt{T}$, another for when $w \geq \sqrt{T}$.

If $w \leq \sqrt{T}$, then $T / \sqrt{w}$ is the dominating term, so we give an instance such that for any strategy $x \in \Delta(A)^T$, $\reg_{\opt}(x)+V_T(x)=\Omega(T/\sqrt{w})$. 
The instance simply repeats the standard lower bound instance from the online learning literature in each disjoint $w$-sized interval. 
More specifically, the learner faces three actions: the null action $\perp$, the ``heads'' action $H$, and the ``tails'' action $L$.
At each time step $t$, the rewards for the actions are determined by a coin flip.
If the coin lands on heads, then $r_t(H) = 1$, while $r_t(L) = 0$.
Otherwise, the coin lands on tails, and $r_t(H) = 0$, while $r_t(L) = 1$.
The cost of each action is 0.
Since the budget constraint never binds, $\opt$ is simply the total expected reward of the best action in each $w$-sized interval, so 
\[
    \textstyle \opt = \frac{T}{w} \cdot (w/2 + \Omega(\sqrt{w})) = T/2 + \Omega(T/\sqrt{w})
\]
Now, consider any strategy $x \in \Delta(A)^T$.
We know that that for all time steps $t$, 
\[
    \EE[r_t(x_t)] = \EE[r_t(H)] \cdot x_t(H) + \EE[r_t(L)] \cdot x_t(L) = 1/2 \cdot (x_t(H) + x_t(L)) \leq 1/2
\]
Thus, $\sum_{t = 1}^T \EE[r_t(x_t)] \leq T/2$. 
It follows that $\reg_{\opt}(x) = \Omega(T/\sqrt{w})$.

Now, if $w \geq \sqrt{T}$, then $\sqrt{wT}$ is the dominating term.
We construct the following instance to show that $\reg_{\opt}(x)+V_T(x)=\Omega(\sqrt{wT})$ for any strategy $x \in \Delta(A)^T$.
In this instance, the learner has budget $B = T/2$ and faces three actions: the null action $\perp$, the ``spend, then save'' action $a_1$, and the ``save, then spend'' action $a_2$.
For all time steps, the reward/cost of the null action is 0.

The rewards of the two non-null actions are generated according to the following stochastic process.
For simplicity, assume that $w$ divides $T$ and that $w$ is even.
Let $\Tilde{R}_{-1} \coloneq 1/2$.
For all $n = 0, \dots, 2T/w - 1$, let $X_n$ denote the random variable that is $\varepsilon$ (which we set later but for simplicity, assume divides $1/2$) with probability $1/2$ and $-\varepsilon$ with probability $1/2$.
Let $\Tilde{R}_n = \Tilde{R}_{-1} + \sum_{k=0}^n X_k$.
Note that $\Tilde{R}_n$ is simply the position of a one-dimensional random walk starting at $\Tilde{R}_{-1}$ after $n$ steps, so $\EE[\Tilde{R}_n] = \Tilde{R}_{-1} = 1/2$.
Let $N$ denote the first time the random walk hits either 0 or 1.
If the random walk never hits either 0 or 1, then we define $N = 2T/w - 1$.
Let $R_n \coloneq \Tilde{R}_n$ for all $n < N$.
For $n \geq N$, we set $R_n \coloneq \Tilde{R}_N$.
We then set the reward and cost of the ``spend, then save'' action at time $t$ to be $r_t(a_1) \coloneq R_{\ceil{2t/w} - 1} \cdot \1(\text{$\ceil{2t/w}$ is odd})$ and $c_t(a_1) \coloneq \1(\text{$\ceil{2t/w}$ is odd})$, respectively.
Meanwhile, the reward and cost of the ``save, then spend'' action at time $t$ is $r_t(a_2) \coloneq R_{\ceil{2t/w}-1} \cdot \1(\text{$\ceil{2t/w}$ is even})$ and $c_t(a_2) \coloneq \1(\text{$\ceil{2t/w}$ is even})$, respectively.
That is, the instance partitions the $T$ time steps into $2T/w$ disjoint intervals of size $w/2$: $[kw/2 + 1, (k+1)w/2]$ for $k = 0, \dots, 2T/w-1$.
Within each interval $[kw/2 + 1, (k+1)w/2]$, the reward of the ``spend, then save'' action is the same and is either $R_k$ if $k$ is even or 0 if $k$ is odd.
Similarly, the cost of the ``spend, then save'' action is 1 if $k$ is even and 0 if $k$ is odd.
The reward and cost of the ``save, then spend'' action is the opposite: if $k$ is even, then $r_t(a_2) = c_t(a_2) = 0$; if $k$ is odd, then $r_t(a_2) = R_k$ and $c_t(a_2) = 1$.

Now, we compute $\opt$. Since $B = T/2$, $\opt$ can only spend $w/2$ in each $w$-sized interval $[kw + 1, (k+1)w]$ for $k = 0, \dots, T/w - 1$.
It is not hard to see that as a result, $\opt$ chooses the ``spend, then save'' action $a_1$ at every time step in $[kw + 1, (k+1)w]$ if $R_{2k} \geq R_{2k + 1}$ and the ``save, then spend'' action $a_2$ at every time step otherwise.
Thus,
\begin{align*}
    \opt - T/4
        &= \textstyle \opt - \frac{w}{4} \sum_{k=0}^{2T/w-1} \EE[\Tilde{R}_k] \tag{$\EE[\Tilde{R}_k] = 1/2$} \\
        &= \textstyle \frac{w}{2} \cdot \sum_{k=0}^{T/w-1} \EE\SqBr{\max\{R_{2k}, R_{2k+1}\} - (\Tilde{R}_{2k} + \Tilde{R}_{2k+1})/2} \\
        &= \begin{aligned}[t]
            & \textstyle \frac{w}{2} \cdot \sum_{k=0}^{T/w-1} \EE\SqBr{(\max\{\Tilde{R}_{2k}, \Tilde{R}_{2k+1}\} - (\Tilde{R}_{2k} + \Tilde{R}_{2k+1})/2) \cdot \1(N \geq 2k+1)} \\
            & + \textstyle \frac{w}{2} \cdot \sum_{k=0}^{T/w-1} \EE\SqBr{(\Tilde{R}_N - (\Tilde{R}_{2k} + \Tilde{R}_{2k+1})/2) \cdot \1(N \leq 2k)}
        \end{aligned} \\
        &= \textstyle \frac{w}{2} \cdot \sum_{k=0}^{T/w-1} \varepsilon/2 \cdot \PP[N \geq 2k+1]
\end{align*}
It remains to bound $\sum_{k=0}^{T/w-1} \PP[N \geq 2k+1]$ from below.
To do so, first note that by Chernoff,
\[
    \textstyle \PP\SqBr{\Tilde{R}_n \not\in [0, 1]} = \PP_{Z_i \sim \mathrm{Bern}(1/2)}\SqBr{\abs{\sum_{i=1}^n Z_i - n/2} \geq 1/(4\varepsilon)} \leq 2\exp\Paren{-\frac{1}{24 \varepsilon^2 n}}
\]
Thus,
\begin{align*}
    \PP\SqBr{N \leq 1/(48\varepsilon^2)} 
        &\leq \textstyle \sum_{n=1/(2\varepsilon)}^{1/(48\varepsilon^2)} \PP\SqBr{\Tilde{R}_n \not\in [0, 1]} \tag{union bound} \\
        &\leq \textstyle 2 \sum_{n=1/(2\varepsilon)}^{1/(48\varepsilon^2)} \exp\Paren{-\frac{1}{24 \varepsilon^2 n}} \\
        &= \textstyle 2 \sum_{n=2}^{1/(12\varepsilon)} e^{-n} \\
        &\leq 1/2
\end{align*}
It follows that 
\[
    \textstyle \opt - T/4 \geq \frac{\varepsilon w}{4} \sum_{k=0}^{\min\{1/(96 \varepsilon^2), T/w\} - 1} \PP[N \geq 2k+1] \geq \min\CrBr{\frac{w}{768 \varepsilon}, \frac{\varepsilon T}{8}}
\]

Now, consider any sequence $x \coloneq (x_t)_t$ of distributions over actions.
Define
\[
    \textstyle Q_t \coloneq \Paren{B - \sum_{s=1}^t c_s(x_s)} R_{\ceil{2t/w} - 1} + \sum_{s=1}^t r_s(x_s)
\]
We show that $(Q_t)_t$ is a martingale.
Let $\mathcal{F}_{t-1}$ denote the natural filtration for $(Q_s)_{s \in [t-1]}$.
\begin{align*}
    \EE\cSqBr{Q_t}{\mathcal{F}_{t-1}}
        &= \textstyle \EE\cSqBr{\Paren{B - \sum_{s=1}^t c_s(x_s)} R_{\ceil{2t/w} - 1} + \sum_{s=1}^t r_s(x_s)}{\mathcal{F}_{t-1}} \\
        &= \begin{aligned}[t]
            &\textstyle \EE\cSqBr{Q_{t-1}}{\mathcal{F}_{t-1}} + \EE\cSqBr{r_t(x_t) - R_{\ceil{2t/w} - 1} \cdot c_t(x_t)}{\mathcal{F}_{t-1}} \\
            &+ \textstyle \Paren{B - \sum_{s=1}^{t-1} c_s(x_s)} \cdot \EE\cSqBr{(R_{\ceil{2t/w} - 1} - R_{\ceil{2(t-1)/w} - 1})}{\mathcal{F}_{t-1}}
        \end{aligned}
\end{align*}
We show that all terms except $\EE\cSqBr{Q_{t-1}}{\mathcal{F}_{t-1}}$ are in fact 0.
Note that by definition of $r_t$ and $c_t$,
\begin{align*}
    \EE & \cSqBr{r_t(x_t) - R_{\ceil{2t/w} - 1} \cdot c_t(x_t)}{\mathcal{F}_{t-1}} \\
        &= \begin{aligned}[t]
            &\EE\cSqBr{\1(\text{$\ceil{2t/w}$ is odd}) \cdot (r_t(a_1) - R_{\ceil{2t/w} - 1}) \cdot x_t(a_1)}{\mathcal{F}_{t-1}} \\
            &+ \EE\cSqBr{\1(\text{$\ceil{2t/w}$ is even}) \cdot (r_t(a_2) - R_{\ceil{2t/w} - 1}) \cdot x_t(a_2)}{\mathcal{F}_{t-1}}
        \end{aligned} \\
        &= 0
\end{align*}
Similarly,
\[
    \EE\cSqBr{(R_{\ceil{2t/w} - 1} - R_{\ceil{2(t-1)/w} - 1})}{\mathcal{F}_{t-1}} = \EE\cSqBr{R_{\ceil{2t/w} - 1}}{\mathcal{F}_{t-1}} - R_{\ceil{2(t-1)/w} - 1} = 0  
\]
where the second inequality follows from the fact that either (1) $\ceil{2t/w} = \ceil{2(t-1)/w}$, in which case $R_{\ceil{2t/w} - 1} = R_{\ceil{2(t-1)/w} - 1}$, (2) $\ceil{2t/w} > \ceil{2(t-1)/w}$ and the random walk has not hit either 0 or 1 by time $(t-1)$, in which case $R_{\ceil{2t/w} - 1}$ is $R_{\ceil{2(t-1)/w} - 1} \pm \varepsilon$ with equal probability, or (3) $\ceil{2t/w} > \ceil{2(t-1)/w}$ and the random walk has hit either 0 or 1 by time $(t-1)$, in which case $R_{\ceil{2t/w} - 1} = R_{\ceil{2(t-1)/w} - 1} = \Tilde{R}_N$.
It follows that $(Q_t)_t$ is a martingale.

Since $(Q_t)_t$ is a martingale, we have that
\[
    \textstyle \EE\SqBr{\Paren{B - \sum_{t=1}^T c_t(x_t)} R_{2T/w - 1} + \sum_{t=1}^T r_t(x_t)} = \EE[Q_T] = Q_0 = B \cdot \Tilde{R}_{-1} = T / 4
\]
Rearranging and noting that $\EE[R_{2T/w - 1}] \in [0, 1]$ yields
\[
    \textstyle \EE\SqBr{\sum_{t=1}^T r_t(x_t)} \leq T/4 + \Paren{\sum_{t=1}^T c_t(x_t) - B}^+ 
\]
It follows that
\[
    \textstyle \reg_{\opt}(x) + V_T(x) \geq \opt - \EE\SqBr{\sum_{t=1}^T r_t(x_t)} + \Paren{\sum_{t=1}^T c_t(x_t) - B}^+  \geq \min\CrBr{\frac{w}{768 \varepsilon}, \frac{\varepsilon T}{8}}
\]
Choosing $\varepsilon = \sqrt{w/T}$ yields the lower bound.
\end{proof}

\section{Necessity of EMD}

Our results imply that it is possible to get sublinear regret against any (sub-exponential) class of strategies whose spending pattern is within EMD $o(T^2)$ of some sub-pacing spending pattern. On the other hand, in the classic spend-or-save counterexample where sublinear regret is not possible, the EMD distance between the relevant spending patterns is $\Omega(T^2)$ (see Section~\ref{sec:benchmark-dilemma}). 

This still raises the question: are there other spending patterns at EMD $\Omega(T^2)$ from a sub-pacing spending pattern that it is possible to compete with? In other words, is EMD the correct metric for characterizing viable spending patterns to compete with?

In this Appendix, we answer this question positively by showing that given \emph{any two} spending patterns at EMD $\Omega(T^2)$ from one another, it is impossible to design a learning algorithm that can guarantee sublinear regret with respect to both of these spending patterns. In fact, this is even possible in setting with only two actions, and where we only need to compete with a family $F$ containing two strategies. 

Before stating the theorem, we introduce a bit of notation to extend our definition of $\opt_{D, F}$ to general classes of spending patterns (i.e., beyond the set of spending patterns at distance $D$ from uniform). Let $\mathcal{C} \subseteq [0, 1]^{T}$ denote a set of spending patterns (we will generally impose $\sum_{t} c_t \leq B$ for any $(c_t)_t \in \mathcal{C}$). Define 

\[
    \textstyle F(\mathcal{C}) \coloneq \CrBr{(x_t)_t \in \Delta(A)^{T}:  (c_t(x_t))_t \in \mathcal{C}}
\]

\noindent
to be the set of strategies that produce one of the spending patterns in $\mathcal{C}$. Similarly, given any set of strategies $F$, define 

\[
     \textsc{Opt}_{\mathcal{C}, F} \coloneq \max_{x \in F \cap F(\mathcal{C})} \sum_{t=1}^T r_t(x_t)
\]
\noindent
to be the optimal performance of a strategy in $F$ following one of the spending patterns in $\mathcal{C}$, and 
\[
    \textsc{Regret}_{\opt_{\mathcal{C},F}}(x) \coloneq \opt_{\mathcal{C}, F} - \sum_{t=1}^T r_t(x_t)
\]

\noindent
to be the regret of a repeated strategy $x$ against this optimal performance. 

\begin{theorem}
Fix any $0 < \alpha < 1$, set $B = \alpha T$ and consider two spending patterns $(c_1, c_2, \dots, c_T)$ and $(c'_1, c'_2, \dots, c'_T)$ satisfying $\sum_{t} c_t = \sum_{t} c'_t = B$ and $\mathrm{EMD}((c_t)_t, (c'_t)_t) = D$. Let $\mathcal{C} = \{c, c'\}$.

Any learning algorithm must incur

\[
    \textsc{Regret}_{\opt_{\mathcal{C}, F}}(x) = \Omega\left(\frac{D}{T}\right)
\]

\noindent
against some sequence of rewards $r_t(x)$ and $c_t(x)$, even for families $F$ with two strategies ($|F| = 2$) and settings with two actions ($A = 2$). 
\end{theorem}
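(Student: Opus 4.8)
The plan is to exhibit two reward/cost instances that share a common prefix but diverge afterward, such that on instance~1 the spending pattern $c$ is the good one to follow and on instance~2 the pattern $c'$ is, and no single algorithm can track both. First I would observe that, since $\mathrm{EMD}((c_t)_t,(c'_t)_t) = \sum_{t=1}^T |\sum_{s=1}^t (c_s - c'_s)| = D$, there must exist a time step $\tau$ at which the cumulative difference $|\sum_{s=1}^\tau (c_s - c'_s)|$ is at least $D/T$; without loss of generality say $\sum_{s=1}^\tau c_s - \sum_{s=1}^\tau c'_s \geq D/T$, i.e. $c$ has spent at least $D/T$ more budget than $c'$ by time $\tau$. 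This time $\tau$ is where the two spending patterns are ``maximally out of sync,'' and it is the point around which the adversary's construction will pivot.

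Next I would design the rewards so that the two strategies in $F$ are forced to be exactly the strategies realizing $c$ and $c'$ (one action is ``active'' with cost profile matching $c$, the other with cost profile matching $c'$, plus the null action to pad — or more simply, use two actions whose per-round cost exactly equals $c_t$ and $c'_t$ and whose reward is something the adversary controls). During rounds $1,\dots,\tau$ the rewards are identical under both instances, so the learner's behavior up to time $\tau$ is the same distribution over the two instances. In instance~1, rounds $\tau+1,\dots,T$ give high reward only to the $c$-strategy (say reward $1$ per unit cost, scaled so the total extra reward available is $\Theta(D/T)$ concentrated in exactly the budget slack that $c$ enjoys over $c'$); in instance~2 the roles are swapped. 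The key quantitative point: because $c$ has $D/T$ more budget spent (equivalently, $c'$ has $D/T$ more budget remaining) at time $\tau$, whichever pattern the learner has been shadowing, it is off by $\Omega(D/T)$ units of ``correctly allocated budget'' on one of the two instances, and since reward is proportional to correctly spent budget in the back half, the regret on that instance is $\Omega(D/T)$. A clean averaging argument (the learner's prefix behavior is fixed, so its expected regret averaged over the two equiprobable instances is $\Omega(D/T)$, hence $\Omega(D/T)$ on at least one) finishes the lower bound.

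The main obstacle I anticipate is making the reward construction actually \emph{force} the $\Omega(D/T)$ gap rather than merely suggest it — in particular, ensuring that the budget constraint (hard-stopping at $B = \alpha T$) does the work: the learner cannot ``have it both ways'' by overspending in the prefix, because overspending relative to $\min(c,c')$ in rounds $1,\dots,\tau$ wastes budget that is then unavailable in the back half under \emph{both} instances, so the learner's best response is genuinely to commit to some cumulative prefix spend level $m$, and then $|m - \sum_{s\le\tau}c_s| + |m - \sum_{s\le\tau}c'_s| \geq D/T$ by the triangle inequality, which is precisely the irreducible loss. I would set the back-half rewards to be $1/\alpha$ per unit cost for the favored action (so that $r_t \le \alpha^{-1} c_t$ respects a bounded reward-to-cost ratio) and $0$ otherwise, check that $\opt_{\mathcal{C},F}$ on each instance equals the reward of following the correct pattern exactly, and then the remaining work is bookkeeping: translating ``prefix cumulative spend $m$'' into a lost-reward bound and invoking the $\sum_t c_t \le B$ constraint to rule out the degenerate over-budget strategies. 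The two-actions, $|F|=2$ requirements are met by construction since only two non-null cost profiles ($c$ and $c'$) ever appear.
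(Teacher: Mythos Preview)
Your high-level plan is the same as the paper's: use the closed-form for EMD to find a pivot time $\tau$ with $|C_\tau - C'_\tau| \ge D/T$ (where $C_\tau = \sum_{s\le\tau}c_s$, $C'_\tau = \sum_{s\le\tau}c'_s$), build two instances that coincide on $[1,\tau]$ and diverge afterward, and then argue that the learner's prefix spend $m$ cannot be simultaneously close to $C_\tau$ and to $C'_\tau$. Your triangle-inequality endgame $|m-C_\tau|+|m-C'_\tau|\ge C_\tau-C'_\tau\ge D/T$ is exactly the paper's final two displayed inequalities summed.

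Where you diverge is in the construction, and the paper's is substantially cleaner. You propose two actions with per-round costs $c_t$ and $c'_t$ (plus possibly null); the paper instead uses a \emph{single} non-null action \textsc{buy} with unit cost and reward $R_t$, realizing the two spending patterns via the mixed strategies ``play \textsc{buy} with probability $c_t$'' and ``play \textsc{buy} with probability $c'_t$''. The two adversarial sequences are then simply $R_t = 1/2$ on $[1,\tau]$ followed by $R_t = 0$ (instance~1) or $R_t = 1$ (instance~2). This collapses the learner's entire state to the scalar $B_\tau$ (prefix spend): reward on instance~1 is exactly $B_\tau/2$, on instance~2 at most $B - B_\tau/2$, and comparing to $C_\tau/2$ and $B - C'_\tau/2$ gives $2R \ge (C_\tau - C'_\tau)/2$ directly. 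Your two-cost-profile version can be pushed through, but it leaves the front-half rewards unspecified, must handle a learner that switches actions mid-stream in the back half, and as written uses three actions rather than two. There is also a sign slip in your narrative: since $C_\tau > C'_\tau$, pattern $c$ is the front-loaded one, so the instance that favors $c$ is the one with \emph{low} reward after $\tau$, not high.
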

\begin{proof}
Consider the setting with $A = 2$ actions: one null actions $\perp$, and one ``buy'' action $\textsc{buy}$.  If the agent picks the null action, they receive reward zero and pay cost zero, if they pick the $\textsc{buy}$ action in round $t$ they pay cost $1$ and receive reward $R_t$ (for some sequence $R_t$ chosen by the adversary). Note that for mixed actions (which we denote by specifying the probability $x_t \in [0, 1]$ of playing $\textsc{buy}$) we have that $c_{t}(x_t) = x_{t}$ and $r_{t}(x_t) = R_{t}x_t$. In addition, define the family $F = \{(x_t)_t, (x'_t)_t\}$ to be the family containing the following strategies: the first strategy will play $x_t = c_t$ for each $t \in [T]$, the second strategy will play $x'_t = c'_t$. Note that for the setting we are considering $x_t$ will always fit the spending pattern $c_t$ and $x'_t$ will always fit the spending pattern $c'_t$.

Because $\mathrm{EMD}((c_t)_t, (c'_t)_t) = D$, there must be a $\tau \in [T]$ such that $\left|\sum_{t=1}^{\tau} c_t - \sum_{t=1}^{\tau}c'_t \right| \geq D/T$. Without loss of generality, assume  $\sum_{t=1}^{\tau} c_t \geq \frac{D}{T} + \sum_{t=1}^{\tau}c'_t$. Consider the following two adversarial sequences $R_t$:

\begin{itemize}
    \item The first sequence sets $R_{t} = 1/2$ for all $t \leq \tau$ and $R_{t} = 0$ for $t > \tau$.
    \item The second sequence sets $R_{t} = 1/2$ for all $t \leq \tau$ and $R_{t} = 1$ for $t > \tau$.
\end{itemize}

Consider the outcome of running any given learning algorithm against the above two adversarial sequences. Since both sequences are the same for the first $\tau$ rounds, the algorithm must spend the same total amount and receive the same total reward in expectation over both sequences. Assume that the algorithm spends $B_{\tau}$ in expectation over the first $\tau$ rounds (and therefore receives $B_{\tau}/2$ total utility). Against the first adversarial sequence $R_t$, this algorithm therefore receives $B_{\tau}/2$ utility over all rounds, and against the second adversarial sequence, this algorithm receives utility at most $B_{\tau}/2 + (B - B_{\tau}) = B - B_{\tau}/2$ (assuming it does not go over budget).

Let $C_{\tau} = \sum_{t=1}^{\tau} c_t$ and $C'_{\tau} = \sum_{t=1}^{\tau} c'_t$ (so by assumption, $C_{\tau} \geq C'_{\tau} + (D/T)$. Similarly to above, note that the first strategy $(x_t)_t$ in $F$ receives $C_{\tau}/2$ against the first adversarial sequence and $B - C_{\tau}/2$ against the second adversarial sequence. Likewise, note that the second strategy $(x'_{t})_t$ in $F$ receives $C_{\tau}'/2$ against the first adversarial sequence and $B - C'_{\tau}/2$ against the second adversarial sequence.

If the learning algorithm we are considering incurs worst-case regret $\textsc{Regret}_{\opt_{\mathcal{C}, F}}(x) \leq R$, it must therefore be the case that

$$B_{\tau}/2 \geq \max\left(\frac{C_{\tau}}{2}, \frac{C'_{\tau}}{2}\right) - R = \frac{C_{\tau}}{2} - R,$$

\noindent
and

$$B - B_{\tau}/2 \geq \max\left(B - \frac{C_{\tau}}{2}, B - \frac{C'_{\tau}}{2}\right) - R = B - \frac{C'_{\tau}}{2} - R.
$$

\noindent
Summing these two inequalities and rearranging, we obtain

$$2R \geq \frac{C_{\tau} - C'_{\tau}}{2} \geq \frac{D}{2T}.$$

In particular, it follows that any worst-case regret bound $R$ must satisfy $R \geq \Omega(D/T)$, as desired.
\end{proof}

\begin{algorithm}
    \caption{\texttt{LagrangianDIw}} \label{algorithm:solution-disjoint-interval}
    \DontPrintSemicolon
    \KwIn{budget $B$, time horizon $T$, interval length $w$, bound on dual action $\overline{\lambda}$}

    Let $\alg_P$ be EXP3-IX (\cite{neu15-exp4-ix}; tuned to minimize regret against the best fixed action in hindsight over a time horizon of length $w$)\;
    Let $\alg_D$ be OGD over $[0, \overline{\lambda}]$ with $\lambda_0 = 0$ and step size  $\eta = \overline{\lambda}/ \sqrt{wT}$\;
    \For{$k = 0, \dots, T/w - 1$}{
        (Re)initialize $\alg_P$\;
        \For{$t = kw+1, \dots, (k+1)w$}{
            $x_t \gets \alg_P$, $\lambda_t \gets \alg_D$\;
            Draw $a_t \sim x_t$\;
            Incur reward $r_t(a_t)$ and cost $c_t(a_t)$\;
            Feed reward $r_t(a_t) + \lambda_t (B/T - c_t(a_t))$ to $\alg_P$\;
            Feed loss $\lambda_t(B/T - c_t(a_t))$ to $\alg_D$\;
        }
    }
\end{algorithm}

\section{Algorithm \texttt{LagrangianDIw}}

We give an algorithm \texttt{\textbf{LagrangianD}isjoint\textbf{I}ntervalsOfLength\textbf{w}} (\texttt{LagrangianDIw}) that efficiently achieves a regret guarantee against
\begin{align*}
     \textstyle \textsc{Opt} \coloneq \sum_{k=0}^{T/w - 1} \max_{x \in \Delta(A)} \left\{\sum_{t = kw+1}^{(k+1)w} r_t(x) : \sum_{t = kw+1}^{(k+1)w}  c_t(x) \leq Bw/T\right\}
\end{align*}
(unlike \texttt{LagrangianEMD} for this benchmark).
The pseudocode of \texttt{LagrangianDIw} is given in Algorithm~\ref{algorithm:solution-disjoint-interval}.

\begin{theorem}\label{theorem:upper-bound}
If there exists a scalar $\alpha \geq 0$ such that $r_t(a) \leq \alpha \cdot c_t(a)$ for all time steps $t$ and actions $a$, then the regret against $\opt$ of choosing actions according to $\texttt{LagrangianDIw}(D, F)$ with $\overline{\lambda} = \alpha$ until no budget remains (and then choosing the null action for the remaining time steps) is at most $\alpha \cdot O(T\sqrt{\abs{A} \log(\abs{A}) / w} \cdot \log(T/(w\delta)) + \sqrt{wT})$ with probability $1 - \delta$ for all $\delta$ simultaneously.
\end{theorem}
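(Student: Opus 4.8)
The plan is to follow the proof of Theorem~\ref{theorem:EMD-main-result} essentially line for line, changing only the primal-side step to account for $\alg_P$ being reinitialized at every window boundary. Write $g_t(a)\coloneq r_t(a)+\lambda_t(B/T-c_t(a))$ for the Lagrangified payoff fed to $\alg_P$, fix $\overline\lambda=\alpha$, and for each $k\in\{0,\dots,T/w-1\}$ set $I_k\coloneq\{kw+1,\dots,(k+1)w\}$ and let $x^*_k\in\Delta(A)$ be the maximizer in the $k$-th summand of $\opt$; let $x^*$ play $x^*_k$ on $I_k$, and let $a=(a_t)_t$ be the actions recommended by $\texttt{LagrangianDIw}$. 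Since $r_t(a)\le\alpha c_t(a)\le\alpha$, $c_t(a)\le1$, $B\le T$, and $\overline\lambda=\alpha$, the payoffs $g_t(\cdot)$ lie in an interval of width $O(\alpha)$, so on each $I_k$ the high-probability EXP3-IX guarantee of \citet{neu15-exp4-ix} (in the form that holds simultaneously over confidence levels) gives, with probability $\ge1-\delta_k$,
\[
\max_{a\in A}\sum_{t\in I_k}g_t(a)-\sum_{t\in I_k}g_t(a_t)\le\alpha\cdot O\!\left(\sqrt{w\abs{A}\log\abs{A}}\cdot\log(1/\delta_k)\right),
\]
and linearity of $g_t$ in the played distribution gives $\sum_{t\in I_k}g_t(x^*_k)\le\max_{a\in A}\sum_{t\in I_k}g_t(a)$. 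Choosing $\delta_k=w\delta/T$ and union-bounding over the $T/w$ windows (which propagates to a bound holding for all $\delta$ simultaneously), then expanding $g_t$ and using $\sum_t r_t(x^*_t)=\opt$, yields
\[
\opt-\sum_{t=1}^T r_t(a_t)\le\sum_{t=1}^T\lambda_t(c_t(x^*_t)-B/T)+\sum_{t=1}^T\lambda_t(B/T-c_t(a_t))+\alpha\cdot O\!\left(T\sqrt{\tfrac{\abs{A}\log\abs{A}}{w}}\log\tfrac{T}{w\delta}\right).
\]

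Next I bound the benchmark-cost term $\sum_t\lambda_t(c_t(x^*_t)-B/T)$ exactly as the $\eta D$ term in Theorem~\ref{theorem:EMD-main-result}, with $D=O(wT)$. Indeed $x^*$ lies in the class $\mathcal{F}$ of Lemma~\ref{lemma:window-benchmark-strategies-within-EMD-wT} (each $x^*_k$ spends at most $Bw/T$ on $I_k$), so by that lemma $x^*\in G(O(wT))$; concretely $d_t\coloneq C_{\lceil t/w\rceil-1}/w$ with $C_k\coloneq\sum_{t\in I_k}c_t(x^*_k)\le Bw/T$ is a sub-pacing pattern of the same total spend as $(c_t(x^*_t))_t$ with $\mathrm{EMD}((c_t(x^*_t))_t,(d_t)_t)\le O(wT)$. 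Because $\lambda_t\ge0$, $d_t\le B/T$, and $\alg_D$ is OGD with step size $\eta$ run \emph{continuously over all $T$ rounds} (so $\abs{\lambda_{t+1}-\lambda_t}\le\eta$ for every $t$), the same summation-by-parts computation gives $\sum_t\lambda_t(c_t(x^*_t)-B/T)\le\sum_t\lambda_t(c_t(x^*_t)-d_t)\le\eta\cdot\mathrm{EMD}((c_t(x^*_t))_t,(d_t)_t)\le\eta\cdot O(wT)$.

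The remaining term $\sum_t\lambda_t(B/T-c_t(a_t))$ together with the switch to $\perp$ once the budget is exhausted is handled verbatim as in Theorem~\ref{theorem:EMD-main-result}: OGD regret of $\alg_D$ over $[0,\overline\lambda]$ (losses linear in $\lambda$ with slope in $[-1,1]$) gives $\sum_t\lambda_t(B/T-c_t(a_t))\le\overline\lambda(B-\sum_t c_t(a_t))+\overline\lambda^2/\eta+\eta T$, and letting $S$ be the last step before cumulative spend exceeds $B$ and charging $\sum_{t=S+1}^T r_t(a_t)\le\overline\lambda\sum_{t=S+1}^T c_t(a_t)$ against $-\overline\lambda(B-\sum_t c_t(a_t))$ yields
\[
\reg_{\opt}((a_{1:S},\perp))\le\eta\cdot O(wT)+\frac{\overline\lambda^2}{\eta}+\eta T+\alpha\cdot O\!\left(T\sqrt{\tfrac{\abs{A}\log\abs{A}}{w}}\log\tfrac{T}{w\delta}\right).
\]
Substituting $\eta=\overline\lambda/\sqrt{wT}=\alpha/\sqrt{wT}$ makes each of $\eta wT$, $\overline\lambda^2/\eta$, and $\eta T=\alpha\sqrt{T/w}\le\alpha\sqrt{wT}$ equal to $\alpha\cdot O(\sqrt{wT})$, giving the claimed bound $\alpha\cdot O(T\sqrt{\abs{A}\log\abs{A}/w}\cdot\log(T/(w\delta))+\sqrt{wT})$.

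The only genuinely new step — and the one I expect to take the most care — is the primal side: verifying that reinitializing EXP3-IX on each window and summing its simultaneous-in-$\delta$ high-probability bounds over the $T/w$ windows suffices to compete with the per-window-optimal $x^*$, and that the resulting $\log(T/(w\delta))$ factor from the union bound does not interact badly with the other error terms. Everything downstream is identical to the proof of Theorem~\ref{theorem:EMD-main-result} with the EMD budget set to $D=O(wT)$; the structural point that makes this possible is that, unlike $\alg_P$, the dual learner $\alg_D$ is \emph{not} reset at window boundaries, so $\lambda_t$ remains globally $\eta$-Lipschitz in $t$ and its OGD regret stays controlled over the full horizon $[T]$.
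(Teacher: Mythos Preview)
Your proposal is correct and complete, but it follows a genuinely different route from the paper's own proof. You reduce to the machinery of Theorem~\ref{theorem:EMD-main-result}: after summing the per-window EXP3-IX bounds you invoke Lemma~\ref{lemma:window-benchmark-strategies-within-EMD-wT} to place the per-window optimum $x^*$ in $G(O(wT))$ and then reuse the summation-by-parts/EMD step verbatim to get $\sum_t\lambda_t(c_t(x^*_t)-B/T)\le\eta\cdot O(wT)$. The paper instead never touches EMD or Lemma~\ref{lemma:window-benchmark-strategies-within-EMD-wT}: it writes each summand of $\opt$ via Lagrangian duality as $\min_{\lambda\ge 0}\max_{x}\{\cdots\}$, plugs in the window-averaged multiplier $\tfrac1w\sum_{t\in I_k}\lambda_t$, and pays a cross term $\tfrac1w\sum_{k}\sum_{t,t'\in I_k}|\lambda_{t'}-\lambda_t|\le\eta wT$ to pass from the averaged $\lambda$ back to the per-step $\lambda_t$'s. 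Both routes land on the same $\eta\cdot O(wT)$ contribution and the identical final bound after setting $\eta=\alpha/\sqrt{wT}$. Your approach is more modular (it makes explicit that \texttt{LagrangianDIw} is just the EMD argument with $D=O(wT)$ and a per-window primal learner), while the paper's is self-contained and avoids the detour through the EMD definition; neither dominates the other.
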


\begin{proof}
Let $a \coloneq (a_t)_t$ denote the actions output by \texttt{LagrangianDIw}.
The regret guarantee of $\alg_P$ implies that with probability at least $1 - w \delta / T$ in each period $k = 0, \dots, T/w - 1$,
\begin{equation}\label{equation:primal-regret}
    \textstyle \sum\limits_{t = kw+1}^{(k+1)w}  ((r_t(a) + \lambda_t(B/T - c_t(a))) - (r_t(a_t) + \lambda_t(B/T - c_t(a_t)))) \leq \overline{\lambda} \cdot O(\sqrt{w \abs{A} \log \abs{A}} \cdot \log(\frac{T}{w\delta}))
\end{equation}
for all actions $a \in A$.
Meanwhile, recall that $\alg_D$ is OGD over $[0, \overline{\lambda}]$ with step size $\eta$.
OGD with these parameters guarantees that
\begin{equation}\label{equation:dual-regret}
    \textstyle \sum_t \lambda_t(B/T - c_t(a_t)) - \overline{\lambda} \left(B-\sum_t c_t(a_t)\right) \leq \frac{\overline{\lambda}^2}{\eta} + \eta T
\end{equation}
Let $S$ denote the last time step before the expenditure of \texttt{LagrangianDIw} exceeds the budget $B$.
That is, $\sum_{t=1}^{S} c_t(a_t) \leq B$ but $\sum_{t=1}^{S+1} c_t(a_t) > B$.
Note that our strategy is $(a_1, \dots, a_S, \perp, \dots, \perp)$, which we denote as $(a_{1 : S}, \perp)$.
Thus, with probability at least $1 - \delta$, our regret is
\begin{align*}
    & \textstyle \reg((a_{1:S}, \perp)) \\
        &= \textstyle \opt - \sum\limits_{t=1}^S r_t(a_t) \\
        &= \textstyle \sum\limits_{k=0}^{T/w - 1} \max\limits_{x \in \Delta(A)} \left\{\sum\limits_{t = kw+1}^{(k+1)w} r_t(x) : \sum\limits_{t = kw+1}^{(k+1)w}  c_t(x) \leq Bw/T\right\} - \sum\limits_{t=1}^S r_t(a_t) \\
        &= \textstyle \sum\limits_{k=0}^{T/w-1} \min\limits_{\lambda \geq 0} \max\limits_{x \in \Delta(A)} \left\{\sum\limits_{t = kw+1}^{(k+1)w}  r_t(x) + \lambda\left(\frac{Bw}{T} - \sum\limits_{t = kw+1}^{(k+1)w}  c_t(x)\right) \right\} - \sum\limits_{t=1}^S r_t(a_t) \tag{Lagrangian duality} \\
        &\leq \textstyle \sum\limits_{k=0}^{T/w-1} \max\limits_{x \in \Delta(A)}  \left\{\sum\limits_{t = kw+1}^{(k+1)w}  r_t(x) + \left(\frac{1}{w}  \sum\limits_{t = kw+1}^{(k+1)w}  \lambda_t \right)  \left(\frac{Bw}{T} - \sum\limits_{t = kw+1}^{(k+1)w}   c_t(x)\right) \right\} - \sum\limits_{t=1}^S r_t(a_t) \\
        &= \textstyle \sum\limits_{k=0}^{T/w-1} \max\limits_{a \in A}  \left\{\sum\limits_{t = kw+1}^{(k+1)w}  r_t(a) + \left(\frac{1}{w}  \sum\limits_{t = kw+1}^{(k+1)w}  \lambda_t \right)  \left(\frac{Bw}{T} - \sum\limits_{t = kw+1}^{(k+1)w}   c_t(a)\right) \right\} - \sum\limits_{t=1}^S r_t(a_t) \\
        &\leq \begin{aligned}[t]
            & \textstyle \sum\limits_{k=0}^{T/w-1} \max\limits_{a \in A}  \left\{\sum\limits_{t = kw+1}^{(k+1)w}  r_t(a) + \lambda_t \left(\frac{B}{T} - c_t(a)\right) \right\} - \sum\limits_{t=1}^T (r_t(a_t) + \lambda_t (B/T - c_t(a_t)) \\
            & \textstyle + \sum\limits_{t=1}^T \lambda_t(B/T - c_t(a_t)) + \sum\limits_{t=S+1}^T r_t(a_t) + \frac{1}{w} \sum\limits_{k=0}^{T/w-1} \sum\limits_{t=kw+1}^{(k+1)w} \sum\limits_{t'=kw+1}^{(k+1)w} \abs{\lambda_{t'} - \lambda_t} 
        \end{aligned} \tag{$B/T - c_t(a) \in [-1, 1]$} \\
        &\leq \textstyle \frac{T}{w} \cdot O(\overline{\lambda}\sqrt{w \abs{A} \log \abs{A}} \cdot \log(\frac{T}{w\delta})) + \sum\limits_{t=1}^T \lambda_t(B/T - c_t(a_t)) + \sum\limits_{t=S+1}^T r_t(a_t) + \eta w T \tag{Equation~\ref{equation:primal-regret}; $\abs{\lambda_t - \lambda_{t+1}} \leq \eta$} \\
        &\leq \textstyle \frac{T}{w} \cdot O(\overline{\lambda}\sqrt{w \abs{A} \log \abs{A}} \cdot \log(\frac{T}{w\delta})) + \sum\limits_{t=1}^T \lambda_t(B/T - c_t(a_t)) + \alpha \sum\limits_{t=S+1}^T c_t(a_t) + \eta w T \tag{$r_t \leq \alpha \cdot c_t$; $\overline{\lambda} = \alpha$} \\
        &\leq \textstyle \frac{T}{w} \cdot O(\overline{\lambda}\sqrt{w \abs{A} \log \abs{A}} \cdot \log(\frac{T}{w\delta})) + \sum\limits_{t=1}^T \lambda_t(B/T - c_t(a_t)) + \alpha \sum\limits_{t=S+1}^T c_t(a_t) + \eta w T \tag{$r_t \leq \alpha \cdot c_t$; $\overline{\lambda} = \alpha$} \\
        &\leq \textstyle \frac{T}{w} \cdot O(\overline{\lambda}\sqrt{w \abs{A} \log \abs{A}} \cdot \log(\frac{T}{w\delta})) + \sum\limits_{t=1}^T \lambda_t(B/T - c_t(a_t)) - \alpha \Paren{B - \sum\limits_{t=1}^Tc_t(a_t)} + \eta w T \tag{definition of $S$} \\
        &\leq \textstyle \frac{T}{w} \cdot O(\overline{\lambda}\sqrt{w \abs{A} \log \abs{A}} \cdot \log(\frac{T}{w\delta})) + \frac{\overline{\lambda}^2}{\eta} + \eta T (w+1) \tag{Equation~\ref{equation:dual-regret}} \\
        &\leq \textstyle \overline{\lambda} \cdot O(T\sqrt{\abs{A} \log (\abs{A}) / w} \cdot \log(\frac{T}{w\delta}) + \sqrt{wT}) \tag{$\eta = \overline{\lambda}/ \sqrt{wT}$}
\end{align*}
\end{proof}

\end{document}